\let\NAT@parse\undefined
	\newtheoremstyle{myplain}% name
	  {}%      Space above, empty = `usual value'
	  {}%      Space below
	  {\itshape}% Body font
	  {}%         Indent amount (empty = no indent, \parindent = para indent)
	  {\bfseries}% Thm head font
	  {}%        Punctuation after thm head
	  {5pt plus 1pt minus 1pt}% Space after thm head: \newline = linebreak
	  {}%         Thm head spec
	\newtheoremstyle{mydefinition}% name
	  {}%      Space above, empty = `usual value'
	  {}%      Space below
	  {\normalfont}% Body font
	  {}%         Indent amount (empty = no indent, \parindent = para indent)
	  {\bfseries}% Thm head font
	  {}%        Punctuation after thm head
	  {5pt plus 1pt minus 1pt}% Space after thm head: \newline = linebreak
	  {}%         Thm head spec
	\theoremstyle{myplain}
	\newtheorem{theorem}{Theorem}
	\newtheorem{lemma}{Lemma}
	\newtheorem{proposition}{Proposition}
	\theoremstyle{mydefinition}
	\newtheorem{definition}{Definition}
    \newcommand{\prl}[1] 		{\left(#1\right)}
    \newcommand{\brl}[1] 		{\left[ #1 \right]}
    \newcommand{\crl}[1] 		{\left\{#1\right\}}
    \newcommand{\R}             {\mathbb{R}} % Reals
 	\newcommand{\N}         		{\mathbb{N}} % Natural Numbers
    \newcommand{\atan} 			{\mathrm{atan}}
  	\newcommand{\card}[1]		{\left| #1 \right|} % cardinality
	\newcommand{\norm}[1] 		{\left \| #1 \right \|} 
    \newcommand{\absval}[1] 		{\left | #1 \right |}
    \newcommand{\vectprod}[2]    { \tr{#1} #2}
    \newcommand{\tr}[1]         {{#1}^\mathrm{T}}
	\newcommand{\argmax}{\operatornamewithlimits{arg\ max}} % with space
	\newcommand{\mat}[1] 		{\mathrm{\mathbf{#1}}}
	\newcommand{\vect}[1] 		{\mathrm{#1}}
	\newcommand{\vectbf}[1]		{\mathrm{\mathbf{#1}}}
    \newcommand{\twovec} [2] { \left[ \begin{array}{c} 
          #1 \\ #2 \end{array} \right] }
    \newcommand{\refeqn}[1]			{(\ref{#1})}
    \newcommand{\reffig}[1]			{Fig. \ref{#1}}
    \newcommand{\refsec}[1]			{Section \ref{#1}}
	\newcommand{\refthm}[1]			{Theorem \ref{#1}}
	\newcommand{\refprop}[1]		{Proposition \ref{#1}}
	\newcommand{\reflem}[1]			{Lemma \ref{#1}}
	\newcommand{\refdef}[1]			{Definition \ref{#1}}
    \newcommand{\ldf}				{  \: {\mathbf :\! = }  \: }
    \newcommand{\sqz}[1]            {\! #1 \!}
\newcommand{\workspace} 		{Q}
\newcommand{\goal} 			{G}
\newcommand{\stateQ} 		{q}
\newcommand{\stateP} 		{p}
\newcommand{\radius} 		{r}
\newcommand{\powerradius} 	{\rho}
\newcommand{\Sradius}       	{\sigma}
\newcommand{\Bradius}       	{\beta}
\newcommand{\safemargin}     {\epsilon}
\newcommand{\safermargin}    {\varepsilon}
\newcommand{\fevent}			{\phi}
\newcommand{\dist}    		{d}
\newcommand{\fdegradation}	{f}
\newcommand{\Gpart} 			{\mathcal{W}}
\newcommand{\Vpart} 			{\mathcal{V}}
\newcommand{\Ppart} 			{\mathcal{P}}
\newcommand{\Fpart} 			{\mathcal{F}}
\newcommand{\Apart}         	{\mathcal{A}}
\newcommand{\Tpart}         	{\mathcal{T}}
\newcommand{\Spart}         	{\mathcal{S}}
\newcommand{\Bpart}         	{\mathcal{B}}
\newcommand{\Gcell} 			{W}
\newcommand{\Vcell} 			{V}
\newcommand{\Pcell} 			{P}
\newcommand{\Acell}         	{A}
\newcommand{\Fcell} 			{F}
\newcommand{\Tcell}			{T}
\newcommand{\Scell}        	{S}
\newcommand{\Bcell}         	{B}
\newcommand{\flocopt}		{\mathcal{H}}
\newcommand{\mass} 			{m}
\newcommand{\ctrd} 			{\vect{c}}
\newcommand{\ctrlinput} 		{u}
\newcommand{\ctrlgain}  		{k}
\newcommand{\disk}           {D} 
\newcommand{\confspace}      {\mathrm{Conf}}
\newcommand{\ctrlvel}        {v}
\newcommand{\ctrlang}        {\omega}
\newcommand{\localdomain}    {\mathcal{D}}
\newcommand{\conv}           {\mathrm{conv}}
\title{\LARGE \bf 
Voronoi-Based Coverage Control of Heterogeneous Disk-Shaped Robots
}
\author{Omur Arslan  and Daniel E. Koditschek%
\thanks{The authors are with the Department of Electrical and Systems Engineering, University of Pennsylvania, Philadelphia, PA 19104. 
E-mail: \{omur, kod\}@seas.upenn.edu. This work was supported by AFOSR under the CHASE MURI FA9550-10-1-0567.}%
%\thanks{This work was supported by AFOSR under the CHASE MURI FA9550-10-1-0567.}%
}
\begin{document}

\maketitle
\thispagestyle{empty}
\pagestyle{empty}

%%%%%%%%%%%%%%%%%%%%%%%%%%%%%%%%%%%%%%%%%%%%%%%%%%%%%%%%%%%%%%%%%%%%%%%%%%%%%%%%
%%%%%%%%%%%%%%%%%%%%%%%%%%%%%%%%%%%%%%%%%%%%%%%%%%%%%%%%%%%%%%%%%%%%%%%%%%%%%%%%
\begin{abstract}
%%%%%%%%%%%%%%%%%%%%%%%%%%%%%%%%%%%%%%%%%%%%%%%%%%%%%%%%%%%%%%%%%%%%%%%%%%%%%%%%
%%%%%%%%%%%%%%%%%%%%%%%%%%%%%%%%%%%%%%%%%%%%%%%%%%%%%%%%%%%%%%%%%%%%%%%%%%%%%%%%

In distributed mobile sensing applications, networks of agents that are heterogeneous respecting both actuation as well as body and sensory footprint are often modelled by recourse to power diagrams --- generalized Voronoi diagrams with additive weights. 
In this paper we adapt the body power diagram to introduce its ``free subdiagram,'' generating a vector field planner that solves the combined sensory coverage and collision avoidance problem via continuous evaluation of an associated constrained optimization problem.  
We propose practical extensions (a heuristic congestion manager that speeds convergence and a lift of the point particle controller to the more practical differential drive kinematics) that  maintain the convergence and collision guarantees.
\end{abstract}

%%%%%%%%%%%%%%%%%%%%%%%%%%%%%%%%%%%%%%%%%%%%%%%%%%%%%%%%%%%%%%%%%%%%%%%%%%%%%%%%
%%%%%%%%%%%%%%%%%%%%%%%%%%%%%%%%%%%%%%%%%%%%%%%%%%%%%%%%%%%%%%%%%%%%%%%%%%%%%%%%
\section{INTRODUCTION}
%%%%%%%%%%%%%%%%%%%%%%%%%%%%%%%%%%%%%%%%%%%%%%%%%%%%%%%%%%%%%%%%%%%%%%%%%%%%%%%%
%%%%%%%%%%%%%%%%%%%%%%%%%%%%%%%%%%%%%%%%%%%%%%%%%%%%%%%%%%%%%%%%%%%%%%%%%%%%%%%%

Among the many proposed multiple mobile sensor coordination strategies \cite{schwager_rus_slotine_IJRR2011},  Voronoi-based coverage control \cite{cortes_martinez_karatas_bullo_TRA2004} uniquely combines  both deployment and allocation in an intrinsically  distributed manner \cite{okabe_etal_2009} via gradient descent (the ``move-to-centroid'' law) down a utility function minimizing the expected event sensing cost to adaptively achieve
a \emph{centroidal Voronoi configuration} (depicted on the left in  \reffig{fig.VoronoiPowerDiagram}).   
Since the original application to homogeneous point robots \cite{cortes_martinez_karatas_bullo_TRA2004}, a growing 
literature considers the extension to heterogeneous groups of robots differing variously in their sensorimotor capabilities \cite{pimenta_kumar_mesquita_pereira_CDC2008, kantaros_thanou_tzes_Automatica2015, pierson_etal_ICRA2015, kwok_martinez_IJRNC2010} by recourse to \emph{power diagrams} --- generalized Voronoi diagrams with additive weights \cite{aurenhammer_JoC1987}.

%%%%%%%%%%%%%%%%%%%%%%%%%%%%%%%%%%%%%%%%%%%%%%%%
%%%%%%%%%%%%%%%%%%%%%%%%%%%%%%%%%%%%%%%%%%%%%%%%
\subsection{Motivation and Prior Literature}
%%%%%%%%%%%%%%%%%%%%%%%%%%%%%%%%%%%%%%%%%%%%%%%%
%%%%%%%%%%%%%%%%%%%%%%%%%%%%%%%%%%%%%%%%%%%%%%%%

Although it inherits many nice properties of a standard Voronoi diagram such as convexity and  dual triangulability, a power diagram may possibly have empty cells associated with some (unassigned) robots and/or some robots may not be contained in their nonempty cells  \cite{aurenhammer_JoC1987}, as situation depicted on the middle in \reffig{fig.VoronoiPowerDiagram}. 
Such   \emph{occupancy defects} (\refdef{def.OccupancyDefect})  generally cost resource inefficiency or redundancy\footnote{Note that a power diagram with an occupancy defect can be beneficial in certain applications to save/balance energy across a mobile network of power limited agents\cite{kwok_martinez_IJRNC2010}.}, and, crucially,  they re-introduce the problem of collision avoidance --- the chief motivation for the present paper.

\begin{figure}[t]
\centering
\begin{tabular}{@{}c@{\hspace{0.5mm}}c@{\hspace{0.5mm}}c@{}}
\includegraphics[width=0.16\textwidth]{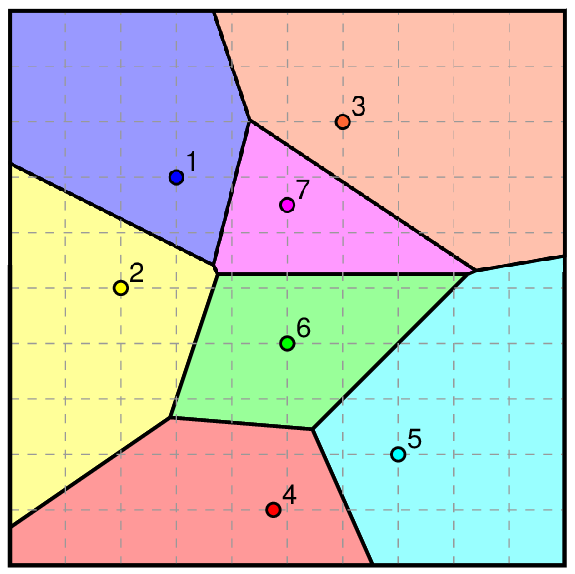} 
&
\includegraphics[width=0.16\textwidth]{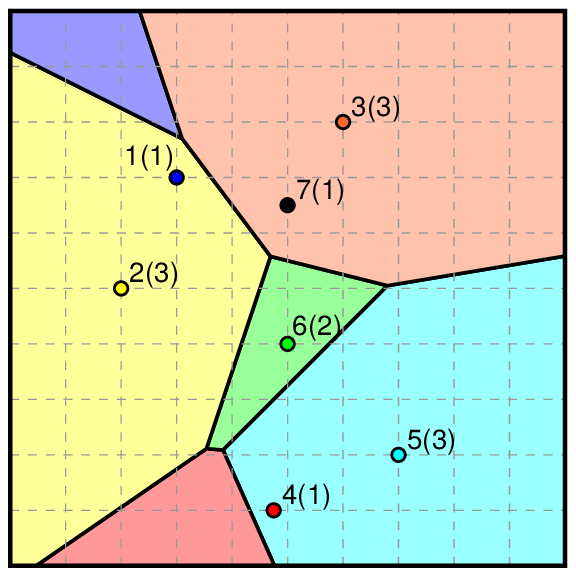} 
&
\includegraphics[width=0.16\textwidth]{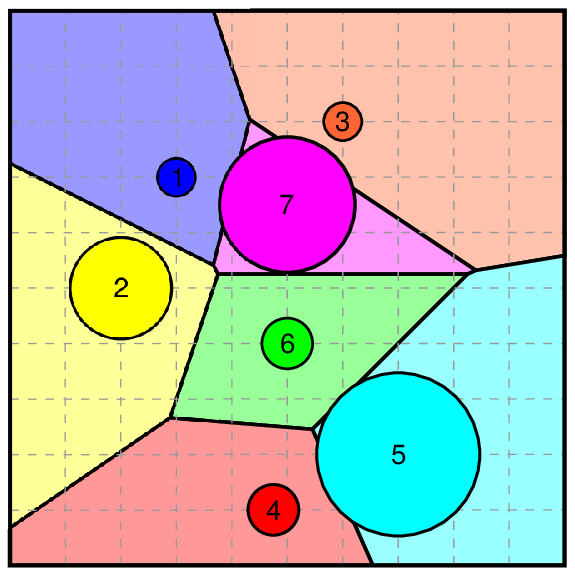} 
\end{tabular}
\vspace{-1mm}
\caption{An illustration of (left) the Voronoi  and (middle) power  diagrams of an environment based on a noncolliding placement of point robots, where the weights of power cells are shown in parentheses. 
Although each point robot is always contained in its Voronoi cell, power cells associated with some robots (e.g. the 7th robot) may be empty  and/or some robots (e.g. the 1st and 4th robots) may not be contained in their nonempty power cells.
(Right) A collision free disk configuration does not necessarily have Voronoi cells containing respective robot bodies.
}
\label{fig.VoronoiPowerDiagram}
\vspace{-2mm}
\end{figure}

Voronoi-based coverage control implicitly entails collision avoidance for point robots since robots move in their pairwise disjoint Voronoi cells \cite{cortes_martinez_karatas_bullo_TRA2004}, but an additional collision avoidance strategy is mandatory for safe navigation of finite size robots.  
Existing work on combining coverage control and collision avoidance generally uses  (i) either heuristic approaches based on repulsive fields \cite{dirafzoon_menhaj_afshar_CCA2010, pimenta_etal_WAFR2008} and
reciprocal velocity obstacles \cite{breitenmoser_martinoli_DARS2014} causing robots to converge to configurations far from  optimal sensing configurations; or (ii) the projection of a vector field  whenever a robot reaches the boundary of  its partition cell \cite{pimenta_kumar_mesquita_pereira_CDC2008, kantaros_thanou_tzes_ICRA2014} introducing a source of discontinuity.
An important observation made in \cite{pimenta_kumar_mesquita_pereira_CDC2008} is that it is sufficient to restrict robot bodies to respective Voronoi regions for collision avoidance, but this is a conservative assumption for robot groups with different body sizes (as illustrated on the right in \reffig{fig.VoronoiPowerDiagram}).

%%%%%%%%%%%%%%%%%%%%%%%%%%%%%%%%%%%%%%%%%%%%%%%%%%%%%%%%%%%%%%%%%
%%%%%%%%%%%%%%%%%%%%%%%%%%%%%%%%%%%%%%%%%%%%%%%%%%%%%%%%%%%%%%%%%
\subsection{Contributions and Organization of the Paper}
%%%%%%%%%%%%%%%%%%%%%%%%%%%%%%%%%%%%%%%%%%%%%%%%%%%%%%%%%%%%%%%%%
%%%%%%%%%%%%%%%%%%%%%%%%%%%%%%%%%%%%%%%%%%%%%%%%%%%%%%%%%%%%%%%%%

In this paper, we provide a necessary and sufficient condition for identifying collision free  configurations of finite size robots in terms of their power diagrams, and 
accordingly propose a constrained coverage control (``move-to-constrained-centroid'') law  whose continuous and piecewise smooth flow asymptotically converges to an optimal sensing configuration avoiding any collisions along the way.
We extend  the practicability of the result by adding a congestion management heuristic for unassigned robots that hastens  the assigned robots' progress, and, finally, adapt  the fully actuated  point particle vector field planner to the widely used kinematic differential drive vehicle model (retaining the convergence and collision avoidance guarantees in both extensions).

This paper is organized as follows. 
\refsec{sec.CoverageControlPointRobot}  briefly summarizes coverage control of point robots. 
\refsec{sec.OccupancyDefect} discusses occupancy defects of power diagrams.
In \refsec{sec.CollisionAvoidance} we introduce  a novel use  of power diagrams for identifying  collision free multirobot configurations, and then propose a constrained optimization framework combining  area coverage and collision avoidance, and present its practical  extensions.
\refsec{sec.NumericalSimulation} offers some numerical studies of
the proposed algorithms. 
\refsec{sec.Conclusion} concludes with a summary of our contributions and a brief discussion of future work.

%%%%%%%%%%%%%%%%%%%%%%%%%%%%%%%%%%%%%%%%%%%%%%%%%%%%%%
%%%%%%%%%%%%%%%%%%%%%%%%%%%%%%%%%%%%%%%%%%%%%%%%%%%%%%    
\section{Coverage Control of Point Robots}
\label{sec.CoverageControlPointRobot}
%%%%%%%%%%%%%%%%%%%%%%%%%%%%%%%%%%%%%%%%%%%%%%%%%%%%%%
%%%%%%%%%%%%%%%%%%%%%%%%%%%%%%%%%%%%%%%%%%%%%%%%%%%%%%

%%%%%%%%%%%%%%%%%%%%%%%%%%%%%%%%%%%%%%%%%%%%%%%%%%%%%%
%%%%%%%%%%%%%%%%%%%%%%%%%%%%%%%%%%%%%%%%%%%%%%%%%%%%%%
\subsection{Location Optimization of Homogeneous Robots}
\label{sec.CoverageControlHomogeneous}
%%%%%%%%%%%%%%%%%%%%%%%%%%%%%%%%%%%%%%%%%%%%%%%%%%%%%%
%%%%%%%%%%%%%%%%%%%%%%%%%%%%%%%%%%%%%%%%%%%%%%%%%%%%%%

Let $\workspace$ be a convex environment in $\R^N$ with a priori known event distribution function $\fevent:Q \rightarrow \R_{>0}$ that models the probability of some event occurs in $\workspace$, and $\vectbf{\stateP} \ldf \prl{\vect{\stateP_1}, \vect{\stateP_2}, \ldots, \vect{\stateP_n}} \in \workspace^{n}$  be a (noncolliding) placement of $n \in \N$ point robots in $\workspace$.\footnote{Here, $\N$ is the set of all natural numbers; $\R$ and $\R_{>0}$ ($\R_{\geq0}$) denote the set of real and positive (nonnegative) real numbers, respectively; and $\R^N$ is the $N$-dimensional Euclidean space.}
Suppose that the event detection (sensing) cost of $i$th robot at location $\stateQ \in \workspace$ is a nondecreasing differentiable function, $\fdegradation:\R \rightarrow \R$, of the Euclidean distance, $\norm{\vect{\stateQ}-\vect{\stateP}_i}$, between $\vect{\stateQ}$ and $\vect{\stateP}_i$.
Further assume that robots are assigned to events based on a partition of $\workspace$  yielding a cover, $\Gpart \ldf \crl{\Gcell_1, \Gcell_2, \ldots, \Gcell_n}$, a collection of  subsets (``cells''), $\Gcell_i$,  whose union returns $\workspace$ but whose cells have mutually disjoint interiors.~\footnote{We will  generally refer to such decompositions as ``diagrams'' but also occasionally allow the slight abuse of language to follow tradition and refer to $\Gpart$ as a \emph{partition}.}
A well established approach (arising in both facility location \cite{hamacher_drezner_2002,okabe_etal_2009} and quantization \cite{lloyd_IT1982,qu_faber_gunzburger_SIAMReview1999} problems) achieves such a cover by minimizing the expected event sensing cost,
\begin{equation}\label{eq.flocopt_homogeneous}
\flocopt\prl{\vectbf{\stateP}, \Gpart} \ldf \sum_{i = 1}^{n} \int_{\Gcell_i} \fdegradation\prl{\norm{\vect{\stateQ}- \vect{\stateP}_i}}\fevent\prl{\vect{\stateQ}}\vect{d}\vect{q}.
\end{equation}

Now observe that, for any fixed $\vectbf{\stateP}$, the optimal task assignment minimizing  $\flocopt$ is the standard Voronoi diagram $\Vpart\prl{\vectbf{\stateP}} \ldf \crl{\Vcell_1, \ldots, \Vcell_n}$ of $\workspace$ based on the configuration $\vectbf{\stateP}$,
\begin{equation}\label{eq.Vcell}
\Vcell_i = \crl{\vect{\stateQ} \in \workspace ~\Big| \norm{\vect{\stateQ} - \vect{\stateP}_i} \leq \norm{\vect{\stateQ} - \vect{\stateP}_j}, \forall j \neq i }.
\end{equation} 
%
%Note that the standard Voronoi diagrams define convex cell decompositions of convex environments, see \reffig{fig.VoronoiPowerDiagram}, and the boundary of each Voronoi cell is  piecewise continuously differentiable with respect to robot locations \cite{bullo_cortes_martinez_DistributedControlRoboticNetworks_2009}. 
%
Thus, given the optimal task assignment of robots, the objective function $\flocopt$ takes the following form
\begin{equation}
\flocopt_{\Vpart}\prl{\vectbf{\stateP}} \!\ldf \flocopt\prl{\vectbf{\stateP}, \Vpart\prl{\vectbf{\stateP}}} = \sum_{i=1}^n \int_{\Vpart_i} \!\! \fdegradation\prl{\norm{\vect{\stateQ} - \vect{\stateP}_i}}\fevent\prl{\vect{\stateQ}} \vect{d}\vect{\stateQ}, \!\!
\end{equation} 
and it is common knowledge that  \cite{qu_faber_gunzburger_SIAMReview1999,okabe_etal_2009, cortes_martinez_karatas_bullo_TRA2004} 
\begin{equation}
\frac{\partial\flocopt_{\Vpart}\prl{\vectbf{\stateP}}}{\partial \vect{\stateP}_i} = \int_{\Vcell_i} \frac{\partial}{\partial\vect{\stateP}_i}\fdegradation\prl{\norm{\vect{\stateQ} - \vect{\stateP}_i}}\fevent\prl{\vect{\stateQ}} \vect{d}\vect{\stateQ}. 
\end{equation}
In the special case of $\fdegradation\prl{x} = x^2$, the partial derivative of $\flocopt_{\Vpart}$ has a simple physical interpretation as follows:
\begin{align}\label{eq.HomogeneousGradient}
\frac{\partial\flocopt_{\Vpart}\prl{\vectbf{\stateP}}}{\partial \vect{\stateP}_i} = 2 \mass_{\Vcell_i} \prl{\vect{\stateP}_i -  \ctrd_{\Vcell_i}},
\end{align} 
where $\mass_{\Vcell_i}$ and $\ctrd_{\Vcell_i}$, respectively, denote the mass and the center of mass of $\Vcell_i$ according to the mass density function~$\fevent$, 
\begin{align}\label{eq.mass_ctrd}
\mass_{\Vcell_i} \ldf \int_{\Vcell_i} \fevent\prl{\stateQ}\text{d}\stateQ, \qquad  \ctrd_{\Vcell_i}\ldf \int_{\Vcell_i} \stateQ ~\fevent\prl{\stateQ}\text{d}\stateQ.
\end{align}
%
%Hence, at a critical point of $\flocopt_{\Vpart}$ robots are located at the centroid of their respective Voronoi cells, called a centroidal Voronoi configuration.  

Assuming first order (completely actuated single integrator) robot dynamics,
\begin{equation} \label{eq.SystemModel}
\dot{\vect{\stateP}}_i = \vect{\ctrlinput}_i, 
\end{equation}
the standard ``move-to-centroid'' law asymptotically steering point robots to a centroidal Voronoi configuration with the guarantee of no collision along the way is 
\begin{align}\label{eq.move2ctrdhomogeneous}
\vect{\ctrlinput}_i = - \ctrlgain \prl{\vect{\stateP}_i - \ctrd_{\Vcell_i}},
\end{align}
where $\ctrlgain \in \R_{> 0}$ is a fixed control gain and the Voronoi diagram $\Vpart\prl{\vectbf{\stateP}}$  of $\workspace$ is assumed to be continuously updated.
Note that $\mass_{\Vcell_i}$ and $\ctrd_{\Vcell_i}$ are both continuously differentiable functions of $\vectbf{\stateP}$ as are both  $\flocopt_{\Vpart}$ and $\vect{\ctrlinput}_i$ \cite{bullo_cortes_martinez_DistributedControlRoboticNetworks_2009}. 
Finally, observe, again, that the coverage control $\vect{\ctrlinput}_i$ supports  a distributed implementation whose local communications structure is specified by the associated Delaunay graph \cite{cortes_martinez_karatas_bullo_TRA2004}.

%%%%%%%%%%%%%%%%%%%%%%%%%%%%%%%%%%%%%%%%%%%%%
%%%%%%%%%%%%%%%%%%%%%%%%%%%%%%%%%%%%%%%%%%%%%
\subsection{Location Optimization of Heterogeneous Robots}
\label{sec.CoverageControlHeterogeneous}
%%%%%%%%%%%%%%%%%%%%%%%%%%%%%%%%%%%%%%%%%%%%%
%%%%%%%%%%%%%%%%%%%%%%%%%%%%%%%%%%%%%%%%%%%%%

In distributed sensing applications, heterogeneity of robotic networks in sensing and actuation \cite{pimenta_kumar_mesquita_pereira_CDC2008, kantaros_thanou_tzes_Automatica2015, pierson_etal_ICRA2015, kwok_martinez_IJRNC2010} is often modelled by recourse to \emph{power diagrams}, generalized Voronoi diagrams with additive weights \cite{aurenhammer_JoC1987}.
More precisely, for a given multirobot configuration $\vectbf{\stateP} \in \workspace^n$, the event sensing cost of $i$th robot at location $\vect{\stateQ} \in \workspace$ is assumed to be given by the \emph{power distance}, $\norm{\vect{\stateQ} - \vect{\stateP}_i}^2 - \powerradius_i^2$ where $\powerradius_i \in \R_{\geq 0}$ is the \emph{power radius} of $i$th robot.
Accordingly, the task assignment of robots are determined by the power diagram $\Ppart\prl{\vectbf{\stateP}, \boldsymbol{\powerradius}} \ldf \crl{\Pcell_1, \Pcell_2, \ldots, \Pcell_n}$ of $\workspace$ based on the configuration $\vectbf{\stateP}$ and  the associated power radii $\boldsymbol{\powerradius} \ldf \prl{\powerradius_1, \powerradius_2, \ldots, \powerradius_n}$,
\begin{equation}\label{eq.Pcell}
\Pcell_i \ldf \! \crl{\vect{\stateQ} \in \workspace \Big | \norm{\vect{\stateQ}\sqz{-} \vect{\stateP}_i}^2 \sqz{-} \powerradius_i^2 \leq \norm{\vect{\stateQ} \sqz{-} \vect{\stateP}_j}^2 \sqz{-} \powerradius_j^2, \forall j \neq i }\!, \!\! 
\end{equation}
and the location optimization function  becomes
\begin{equation}\label{eq.flocopt_heterogeneous}
\flocopt_{\Ppart}\prl{\vectbf{\stateP}, \boldsymbol{\powerradius}} = \sum_{i =1}^{n} \int_{\Pcell_i} \prl{\norm{\vect{\stateQ}\sqz{-} \vect{\stateP}_i}^2 \sqz{-} \powerradius_i^2} \fevent\prl{\vect{\stateQ}} \vect{d}\vect{\stateQ}.
\end{equation}
Note that in the special case of $\powerradius_i = \powerradius_j$ for all $i\neq j$ the power diagram $\Ppart\prl{\vectbf{\stateP},\boldsymbol{\powerradius}}$ and the Voronoi diagram $\Vpart\prl{\vectbf{\stateP}}$ of $\workspace$ are identical, i.e. $\Pcell_i = \Vcell_i$. %, and the utility functions $\flocopt_{\Ppart}$ and $\flocopt_{\Vpart}$ are related by a constant offset, i.e. $\flocopt_{\Ppart}\prl{\vectbf{p}, \boldsymbol{\powerradius}} = \flocopt_{\Vpart}\prl{\vectbf{p}} + c\prl{\boldsymbol{\powerradius}}$ for some $c\prl{\boldsymbol{\powerradius}} \in \R$.    

Similar to \refeqn{eq.HomogeneousGradient}, for fixed $\boldsymbol{\powerradius}$, the partial derivative of $\flocopt_{\Ppart}$ takes the following simple form \cite{pimenta_kumar_mesquita_pereira_CDC2008, kwok_martinez_IJRNC2010, pimenta_etal_WAFR2008},
\begin{equation}\label{eq.HeterogeneousGradient}
\frac{\partial\flocopt_{\Ppart}\prl{\vectbf{\stateP}, \boldsymbol{\powerradius}}}{\partial \vect{\stateP}_i} = 2 \mass_{\Pcell_i}\prl{\vect{\stateP}_i - \ctrd_{\Pcell_i}},
\end{equation}
where $\mass_{\Pcell_i}$ and $\ctrd_{\Pcell_i}$ are the mass and the center of mass of $\Pcell_i$, respectively, as defined in \refeqn{eq.mass_ctrd}. 
\footnote{\label{ft.ctrdwelldefined} To be well defined we set $\ctrd_{\Pcell_i} = \vect{\stateP}_ i$  whenever $\Pcell_i$ has an empty interior.}
For the single integrator robot model \refeqn{eq.SystemModel}, the standard ``move-to-centroid'' law of heterogeneous robotic networks asymptotically driving robots to a critical point of $\flocopt_{\Ppart}\prl{.,\boldsymbol{\powerradius}}$, where robots are located at the centroids of their respective power cells, is defined as
\begin{equation}\label{eq.move2ctrdheterogeneous}
\vect{\ctrlinput}_i = -\ctrlgain\prl{\vect{\stateP}_i - \ctrd_{\Pcell_i}},
\end{equation}   
for some fixed $\ctrlgain \in \R_{> 0}$ and the power diagram $\Ppart\prl{\vectbf{\stateP}, \boldsymbol{\powerradius}}$ of $\workspace$ is assumed to be continuously updated.
Notwithstanding its welcome inheritance of many standard Voronoi properties (e.g.,  convexity, dual triangulability),  a power diagram may yield  empty cells associated with some robots and/or some robots may not be contained in their nonempty power cells, illustrated in \reffig{fig.VoronoiPowerDiagram}. 
In consequence, contrary to the case of homogeneous robots, the ``move-to-centroid'' law of heterogeneous point robots is discontinuous and it cannot guarantee collision free navigation.  
Thus, in past literature, for robots of finite but heterogeneous size, the standard ``move-to-centroid'' law inevitably requires an additional heuristic collision avoidance strategy for safe navigation.

%%%%%%%%%%%%%%%%%%%%%%%%%%%%%%%%%%%%%%%%%%%%%%%%
%%%%%%%%%%%%%%%%%%%%%%%%%%%%%%%%%%%%%%%%%%%%%%%%
\section{Occupancy Defects of Power Diagrams}
\label{sec.OccupancyDefect}
%%%%%%%%%%%%%%%%%%%%%%%%%%%%%%%%%%%%%%%%%%%%%%%%
%%%%%%%%%%%%%%%%%%%%%%%%%%%%%%%%%%%%%%%%%%%%%%%%

%We will refer to this unfortunate cell occupancy defect as the occupancy defect property of power diagrams:
%
\begin{definition}\label{def.OccupancyDefect}
(\emph{Occupancy Defect}) The \emph{power partition}, $\Ppart\prl{\vectbf{\stateP}, \boldsymbol{\powerradius}}$, associated with configuration $\vectbf{\stateP} \in \workspace^n$ and  radii $\boldsymbol{\powerradius} \in \prl{\R_{\geq 0}}^n$ is said to have an \emph{occupancy defect} if $\vect{\stateP}_i \not \in \Pcell_{i}$ for some $i \in \crl{1,2, \ldots n}$. 
\end{definition}

Configurations incurring occupancy defects introduce a number of problems.
First of all, empty partition cells cause resource redundancy because some robots may never be assigned to any event happening around them. 
Such robots  do not only become redundant, but also complicate collision avoidance as  (moving or stationary) obstacles and limit the mobility of others.
In general, robots that are not contained in their respective cells require an extra care for collision avoidance.

A straightforward characterization of an occupancy defective configuration is:
\footnote{In \cite{kantaros_thanou_tzes_Automatica2015} the authors note the issue of empty power cells and give a similar sufficient condition for each robot to be contained in its power cell.% but they neither use this sufficiency condition in the design of their proposed coverage control law nor guarantee that this condition holds during the evaluation of their proposed coverage control algorithm.
}
\begin{proposition}\label{prop.PartitionDegeneracy}
Given radii $\boldsymbol{\powerradius} \in \prl{\R_{\geq 0}}^n$, configuration $\vectbf{\stateP} \in \workspace^n$ does not incur an occupancy defective power diagram  if and only if $\norm{\vect{\stateP}_i - \vect{\stateP}_j}^2\geq \absval{\powerradius_i^2 - \powerradius_j^2}$ for all $i \neq j$. 
\end{proposition}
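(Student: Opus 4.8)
The plan is to characterize occupancy defects one pair at a time. Fix $\boldsymbol{\powerradius}$ and $\vectbf{\stateP}$. By \refdef{def.OccupancyDefect}, $\Ppart\prl{\vectbf{\stateP},\boldsymbol{\powerradius}}$ has an occupancy defect precisely when there is some $i$ with $\vect{\stateP}_i \notin \Pcell_i$. Unwinding \refeqn{eq.Pcell}, $\vect{\stateP}_i \in \Pcell_i$ means that for every $j \neq i$ we have $\norm{\vect{\stateP}_i - \vect{\stateP}_i}^2 - \powerradius_i^2 \leq \norm{\vect{\stateP}_i - \vect{\stateP}_j}^2 - \powerradius_j^2$, i.e. $-\powerradius_i^2 \leq \norm{\vect{\stateP}_i - \vect{\stateP}_j}^2 - \powerradius_j^2$, which rearranges to $\norm{\vect{\stateP}_i - \vect{\stateP}_j}^2 \geq \powerradius_j^2 - \powerradius_i^2$. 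So $\vect{\stateP}_i \in \Pcell_i$ for all $i$ if and only if $\norm{\vect{\stateP}_i - \vect{\stateP}_j}^2 \geq \powerradius_j^2 - \powerradius_i^2$ for all ordered pairs $(i,j)$ with $i \neq j$.

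The last step is to observe that the family of inequalities $\norm{\vect{\stateP}_i - \vect{\stateP}_j}^2 \geq \powerradius_j^2 - \powerradius_i^2$ over all ordered pairs is equivalent to the family $\norm{\vect{\stateP}_i - \vect{\stateP}_j}^2 \geq \absval{\powerradius_i^2 - \powerradius_j^2}$ over all unordered pairs. Indeed, for a fixed unordered pair $\{i,j\}$, the two ordered versions assert $\norm{\vect{\stateP}_i - \vect{\stateP}_j}^2 \geq \powerradius_j^2 - \powerradius_i^2$ and $\norm{\vect{\stateP}_i - \vect{\stateP}_j}^2 \geq \powerradius_i^2 - \powerradius_j^2$; since $\norm{\vect{\stateP}_i - \vect{\stateP}_j}^2 \geq 0$ always, requiring both is the same as requiring $\norm{\vect{\stateP}_i - \vect{\stateP}_j}^2 \geq \max\crl{\powerradius_i^2 - \powerradius_j^2,\ \powerradius_j^2 - \powerradius_i^2} = \absval{\powerradius_i^2 - \powerradius_j^2}$. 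Taking the contrapositive of "no defect iff all pairwise inequalities hold" gives the stated equivalence, and the proof is complete.

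There is essentially no obstacle here: the argument is a direct unfolding of the definitions of $\Pcell_i$ and of an occupancy defect, plus the elementary remark that a nonnegative quantity dominates both $a$ and $-a$ iff it dominates $\absval{a}$. The only mild subtlety worth a sentence is the convention from footnote~\ref{ft.ctrdwelldefined} / the cover property: membership $\vect{\stateP}_i \in \Pcell_i$ is governed solely by the defining half-space inequalities in \refeqn{eq.Pcell} and the fact that $\vect{\stateP}_i \in \workspace$ (guaranteed since $\vectbf{\stateP} \in \workspace^n$), so no issue arises from boundary cases or empty cells — an empty cell $\Pcell_i$ simply certifies that $\vect{\stateP}_i \notin \Pcell_i$, which is already an occupancy defect and is consistent with failure of the corresponding inequality.
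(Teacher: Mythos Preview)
Your argument is correct and follows essentially the same route as the paper's own proof: both simply unwind \refdef{def.OccupancyDefect} and the defining inequalities \refeqn{eq.Pcell} at each generator, then combine the two ordered inequalities for a pair $\{i,j\}$ into the single absolute-value inequality. Your version is slightly more explicit about the ordered-to-unordered-pair step, but there is no substantive difference.
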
 
\begin{proof}
By \refdef{def.OccupancyDefect}, $\Ppart\prl{\vectbf{\stateP}, \boldsymbol{\powerradius}}$ has no occupancy defect if and only if $\vect{\stateP}_i \in \Pcell_i$ for all $i$, which is the case if and only if
%
%\noindent
%
\begin{align}
\norm{\vect{\stateP}_i - \vect{\stateP}_i}^2  - \powerradius_i^2 &\leq \norm{\vect{\stateP}_i - \vect{\stateP}_j}^2  - \powerradius_j^2, \\
\norm{\vect{\stateP}_j - \vect{\stateP}_j}^2  - \powerradius_j^2 &\leq \norm{\vect{\stateP}_j - \vect{\stateP}_i}^2  - \powerradius_i^2, 
\end{align}
for all $i \neq j$. 
Thus, the result follows.
\end{proof}

%In what follows, we present a novel use of power partitions for identifying collision free multirobot configurations and propose a constrained optimization combining coverage control and collision avoidance for heterogeneous robots. 

%%%%%%%%%%%%%%%%%%%%%%%%%%%%%%%%%%%%%%%%%%%%%%%%%%%%%%%%%%%%
%%%%%%%%%%%%%%%%%%%%%%%%%%%%%%%%%%%%%%%%%%%%%%%%%%%%%%%%%%%%
\section{Combining Coverage Control and Collision Avoidance}
\label{sec.CollisionAvoidance}
%%%%%%%%%%%%%%%%%%%%%%%%%%%%%%%%%%%%%%%%%%%%%%%%%%%%%%%%%%%%
%%%%%%%%%%%%%%%%%%%%%%%%%%%%%%%%%%%%%%%%%%%%%%%%%%%%%%%%%%%%

Throughout the rest of paper, we consider heterogeneous disk-shaped multirobot configurations, $\vectbf{\stateP} = \prl{\vect{\stateP}_1, \vect{\stateP}_2, \ldots, \vect{\stateP}_n} \in \workspace^{n}$, in $\workspace$  with associated vectors of nonnegative body radii $\boldsymbol{\Bradius}\ldf \prl{\Bradius_1, \Bradius_2, \ldots, \Bradius_n} \in \prl{\R_{\geq 0}}^n$ and  sensory footprint radii $\boldsymbol{\Sradius}\ldf \prl{\Sradius_1, \Sradius_2, \ldots, \Sradius_n} \in \prl{\R_{\geq 0}}^n$, where $i$th robot is centered at $\vect{\stateP}_i \in Q$ and has body radius $\Bradius_i \geq 0$ and sensory footprint radius $\Sradius_i \geq 0$.
Accordingly, we will denote by  $\Bpart\prl{\vectbf{\stateP}, \boldsymbol{\Bradius}} = \crl{\Bcell_1, \Bcell_2, \ldots, \Bcell_n}$, a cover we term   
the \emph{body diagram} of $\workspace$,  solving the power problem \refeqn{eq.Pcell}, \refeqn{eq.flocopt_heterogeneous}, defined  from $\flocopt_{\Bpart}\prl{\vectbf{\stateP}, \boldsymbol{\Bradius}}$; and we will denote by  $\Spart\prl{\vectbf{\stateP}, \boldsymbol{\Sradius}} = \crl{\Scell_1, \Scell_2, \ldots, \Scell_n}$, a cover we term the sensor diagram of $\workspace$, solving the corresponding problem defined  by  $\flocopt_{\Spart}\prl{\vectbf{\stateP}, \boldsymbol{\Sradius}}$. 
We also find it convenient to denote the configuration space of body-noncolliding disks of radii $\boldsymbol{\Bradius}$ in $\workspace$ as
\begin{align}\label{eq.confspace}
\confspace\prl{\workspace, \boldsymbol{\Bradius}} \ldf \left\{ \Big.\right.\!\vectbf{\stateP} \in \workspace^n \Big| & \norm{\vect{\stateP}_i \sqz{-} \vect{\stateP}_j} > \Bradius_i \sqz{+} \Bradius_j ~~ \forall i \neq j,\nonumber \\
& \hspace{5mm}\disk\prl{\vect{\stateP}_i, \Bradius_i} \subset \mathring{Q} \quad \forall i \!\left. \Big.\right \}\!,\!
\end{align}
where $\disk\prl{\vect{x}, \radius} \ldf \crl{\vect{y} \in \R^{N} \big| \norm{\vect{x} - \vect{y}} \leq \radius}$ is the closed disk in $\R^N$ centered at $\vect{x} \in \R^N $ with radius $\radius \geq 0$, and $\mathring{\workspace}$ is the interior of $\workspace$.
Note that the vectors of  body radii $\boldsymbol{\Bradius}$ and sensory footprint radii $\boldsymbol{\Sradius}$ are not necessary equal since $\boldsymbol{\Bradius}$ models the heterogeneity of robots in body size, $\boldsymbol{\Sradius}$ models their heterogeneity in sensing and actuation.

%%%%%%%%%%%%%%%%%%%%%%%%%%%%%%%%%%%%%%%%%%%%
%%%%%%%%%%%%%%%%%%%%%%%%%%%%%%%%%%%%%%%%%%%%
\subsection{Encoding Collisions via Body Diagrams}
%%%%%%%%%%%%%%%%%%%%%%%%%%%%%%%%%%%%%%%%%%%%
%%%%%%%%%%%%%%%%%%%%%%%%%%%%%%%%%%%%%%%%%%%%

A geometric characterization of collision free multirobot configurations in $\workspace$ via their body diagrams is:
\begin{proposition}\label{prop.CollisionPowerDiagram}
Let $\Bpart\prl{\vectbf{\stateP}, \boldsymbol{\Bradius}}$ be the body diagram of $\workspace$ associated with configuration  $\vectbf{\stateP} \in  \workspace^{n}$ and  body radii $\boldsymbol{\Bradius} \in \prl{\R_{\geq 0}}^n$. 
Then  $\vectbf{\stateP}$ is  collision free  if and only if every robot body is contained in the interior of its body cell, i.e.
\begin{equation}
\!\vectbf{\stateP} \in \confspace\prl{\workspace,\boldsymbol{\Bradius}} \Longleftrightarrow \disk\prl{\vect{\stateP}_i, \Bradius_i} \subset \mathring{\Bcell}_i \quad  \forall i .\!\!\!
\end{equation}
\end{proposition}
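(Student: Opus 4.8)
The plan is to unpack both sides of the claimed equivalence into pointwise geometric inequalities and show they coincide. First I would recall that the body diagram $\Bpart(\vectbf{\stateP},\boldsymbol{\Bradius})$ solves the power problem \refeqn{eq.Pcell} with power radii $\powerradius_i = \Bradius_i$, so $\Bcell_i = \{\vect{\stateQ} \in \workspace \mid \norm{\vect{\stateQ} - \vect{\stateP}_i}^2 - \Bradius_i^2 \leq \norm{\vect{\stateQ} - \vect{\stateP}_j}^2 - \Bradius_j^2,\ \forall j \neq i\}$. The key algebraic observation is that the power-distance inequality $\norm{\vect{\stateQ}-\vect{\stateP}_i}^2 - \Bradius_i^2 \leq \norm{\vect{\stateQ}-\vect{\stateP}_j}^2 - \Bradius_j^2$ can be rearranged, and in particular the strict version is what characterizes the interior $\mathring{\Bcell}_i$ (together with strict membership in $\mathring{\workspace}$).

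For the forward direction, assume $\vectbf{\stateP} \in \confspace(\workspace, \boldsymbol{\Bradius})$, fix $i$, and take an arbitrary point $\vect{\stateQ} \in \disk(\vect{\stateP}_i, \Bradius_i)$, so $\norm{\vect{\stateQ}-\vect{\stateP}_i} \leq \Bradius_i$. I would show $\vect{\stateQ}$ lies in $\mathring{\workspace}$ (immediate since $\disk(\vect{\stateP}_i,\Bradius_i) \subset \mathring{\workspace}$ by definition of $\confspace$) and that for each $j \neq i$ the strict power inequality holds. The estimate runs: by the triangle inequality and $\norm{\vect{\stateP}_i - \vect{\stateP}_j} > \Bradius_i + \Bradius_j$, one gets $\norm{\vect{\stateQ}-\vect{\stateP}_j} \geq \norm{\vect{\stateP}_i-\vect{\stateP}_j} - \norm{\vect{\stateQ}-\vect{\stateP}_i} > \Bradius_i + \Bradius_j - \norm{\vect{\stateQ}-\vect{\stateP}_i} \geq \Bradius_j$ (and in fact $\norm{\vect{\stateQ}-\vect{\stateP}_j} > \Bradius_j$ strictly), and then squaring and comparing $\norm{\vect{\stateQ}-\vect{\stateP}_j}^2 - \Bradius_j^2$ against $\norm{\vect{\stateQ}-\vect{\stateP}_i}^2 - \Bradius_i^2 \leq 0$ yields the strict inequality; I would be careful to track where the collision-free strict separation $\norm{\vect{\stateP}_i-\vect{\stateP}_j} > \Bradius_i+\Bradius_j$ feeds in to make the inequality strict rather than merely weak, so that $\vect{\stateQ}$ lands in the open cell.

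For the converse, assume $\disk(\vect{\stateP}_i, \Bradius_i) \subset \mathring{\Bcell}_i$ for all $i$. Taking $\vect{\stateQ} = \vect{\stateP}_i$ shows $\vect{\stateP}_i \in \mathring{\Bcell}_i \subset \mathring{\workspace}$; more usefully, I would apply the hypothesis to the boundary point of $\disk(\vect{\stateP}_i,\Bradius_i)$ lying on the segment toward $\vect{\stateP}_j$, i.e. $\vect{\stateQ} = \vect{\stateP}_i + \Bradius_i \frac{\vect{\stateP}_j - \vect{\stateP}_i}{\norm{\vect{\stateP}_j - \vect{\stateP}_i}}$, and read off the strict power inequality at that point to recover $\norm{\vect{\stateP}_i-\vect{\stateP}_j} > \Bradius_i + \Bradius_j$; the containment $\disk(\vect{\stateP}_i,\Bradius_i)\subset\mathring{\Bcell}_i\subset\mathring{\workspace}$ gives the second defining condition of $\confspace$. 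The main obstacle I anticipate is the careful bookkeeping of strict versus non-strict inequalities — the definition of $\confspace$ uses strict separation and strict interiority, $\mathring{\Bcell}_i$ is an open set, and the power cells $\Bcell_i$ are closed with possibly shared boundary — so I would want to verify explicitly that the interior of the body cell is exactly the set of points satisfying all the strict power inequalities and lying in $\mathring{\workspace}$, which requires a small argument that a point satisfying all strict inequalities has a neighborhood still in $\Bcell_i$ (clear by continuity) and conversely that an interior point cannot satisfy any of the defining inequalities with equality. A minor degenerate case worth a remark: if some $\Bradius_i = 0$ the disk is a point and the statement reduces to $\vect{\stateP}_i \in \mathring{\Bcell}_i$, which is consistent.
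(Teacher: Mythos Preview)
Your proposal is correct, and both directions take a genuinely different route from the paper's proof.

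For the direction $\confspace \Rightarrow \disk \subset \mathring{\Bcell}_i$, the paper invokes \refprop{prop.PartitionDegeneracy} to get $\vect{\stateP}_i \in \Bcell_i$ and then explicitly computes the perpendicular distance $\dist(\vect{\stateP}_i, H_{ij})$ from $\vect{\stateP}_i$ to each separating hyperplane $H_{ij}$, showing algebraically that it exceeds $\Bradius_i$. Your pointwise argument --- take $\vect{\stateQ} \in \disk(\vect{\stateP}_i,\Bradius_i)$, bound $\norm{\vect{\stateQ}-\vect{\stateP}_j}$ via the triangle inequality, and compare signs of the two power distances --- is more elementary and avoids the hyperplane formula entirely; it is arguably cleaner once you have pinned down (as you note) that $\mathring{\Bcell}_i$ is exactly the locus of strict power inequalities inside $\mathring{\workspace}$.

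For the converse, the paper's argument is a one-liner: the body cells have pairwise disjoint interiors, so $\disk(\vect{\stateP}_i,\Bradius_i) \subset \mathring{\Bcell}_i$ for all $i$ immediately forces the disks to be pairwise disjoint. Your route via the boundary point $\vect{\stateQ} = \vect{\stateP}_i + \Bradius_i(\vect{\stateP}_j-\vect{\stateP}_i)/\norm{\vect{\stateP}_j-\vect{\stateP}_i}$ also works, but be aware of a small wrinkle you did not flag: the identity $\norm{\vect{\stateQ}-\vect{\stateP}_j} = \norm{\vect{\stateP}_i-\vect{\stateP}_j} - \Bradius_i$ only holds when $\norm{\vect{\stateP}_i-\vect{\stateP}_j} \geq \Bradius_i$. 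In the remaining case $\vect{\stateP}_j$ lies inside $\disk(\vect{\stateP}_i,\Bradius_i) \subset \mathring{\Bcell}_i$, and combining this with the symmetric inclusion for $j$ quickly yields a contradiction --- but at that point you are essentially using the disjoint-interiors observation anyway. So the paper's version of this direction is both shorter and more robust.
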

\begin{proof}
The sufficiency ($\Longleftarrow$) follows because  $\Bpart\prl{\vectbf{\stateP}, \boldsymbol{\Bradius}}$ is a cover of $\workspace$ whose elements have disjoint interiors.
Hence, given $\disk\prl{\vect{\stateP}_i, \Bradius_i} \subset \mathring{\Bcell}_i$ for all $i$, we have  $\disk\prl{\vect{\stateP}_i, \Bradius_i} \subset \mathring{\workspace}$ and $\disk\prl{\vect{\stateP}_i, \Bradius_i} \cap \disk\prl{\vect{\stateP}_j, \Bradius_j} = \emptyset$ for all $i \neq j$, and so $\norm{\vect{\stateP}_i - \vect{\stateP}_j} > \Bradius_i + \Bradius_j$.
Thus, $\vectbf{\stateP} \in \confspace\prl{\workspace,\boldsymbol{\Bradius}}$.

To see the necessity ($\Longrightarrow$), for any $\vectbf{\stateP} \in \confspace\prl{\workspace,\boldsymbol{\Bradius}}$ we will show that $\vect{\stateP}_i \in \Bcell_i$ for all $i$, and the distance between $\vect{\stateP}_i$ and the boundary $\partial \Bcell_i$ of $\Bcell_i$ is greater than $\Bradius_i$, i.e. $\min_{\vect{x}\in \partial \Bcell_i} \norm{\vect{x} - \vect{\stateP}_i} > \Bradius_i$, and so $\disk\prl{\vect{\stateP}_i, \Bradius_i} \subset \mathring{\Bcell}_i$.

It follows from \refprop{prop.PartitionDegeneracy} that for any $\vectbf{\stateP} \in \confspace\prl{\workspace,\boldsymbol{\Bradius}}$ ~   $\Bpart\prl{\vectbf{\stateP}, \boldsymbol{\Bradius}}$ has no occupancy defect (Def. \ref{def.OccupancyDefect}), i.e. $\vect{\stateP}_i \in \Bcell_i ~~\forall i$.

The boundary $\partial \Bcell_i$ of $\Bcell_i$ is defined by the boundary $\partial \workspace$ of $\workspace$ and  the separating separating hyperplane between body cells $\Bcell_i$ and $\Bcell_j$ for some $j \neq i$ \cite{aurenhammer_JoC1987}. 
By definition \refeqn{eq.confspace}, we have $\min_{\vect{x}\in \partial \workspace} \norm{\vect{x} - \vect{\stateP}_i}> \Bradius_i$ for any $\vectbf{\stateP} \in \confspace\prl{\workspace, \boldsymbol{\Bradius}}$.

Now observe that, for any $i\neq j$ the separating hyperplane between body cells $\Bcell_i$ and $\Bcell_j$ is perpendicular to the line joining $\vect{\stateP}_i$ and $\vect{\stateP}_j$  and is given by \cite{aurenhammer_JoC1987}
\begin{equation}
\!\!\!H_{ij} \ldf\!\!\crl{\vect{x} \sqz{\in} \R^{N} \Big| 2 \vectprod{\vect{x}}{\!\!\prl{\vect{\stateP}_i \sqz{-} \vect{\stateP}_j} \sqz{=} \Bradius_j^2 \sqz{-} \Bradius_i^2 \sqz{+} \norm{\vect{\stateP}_i}^2 \sqz{-} \norm{\vect{\stateP}_j}}^2 \!}\!,\!\!   \!
\end{equation}
and the perpendicular distance of $\vect{\stateP}_i$ to $H_{ij}$ is given by
\begin{align}
\dist\prl{\vect{\stateP}_i, H_{ij}} \ldf \frac{\norm{\vect{\stateP}_i - \vect{\stateP}_j}}{2} + \frac{\Bradius_i^2 - \Bradius_j^2}{2\norm{\vect{\stateP}_i - \vect{\stateP}_j}}.
\end{align}
Note that $\dist\prl{\vect{\stateP}_i, H_{ij}}$ is negative when $\Bpart\prl{\vectbf{\stateP}, \boldsymbol{\Bradius}}$ has an occupancy defect; and we have from \refprop{prop.PartitionDegeneracy} that $\Bpart\prl{\vectbf{\stateP}, \boldsymbol{\Bradius}}$ is free of such a defect for any $\vectbf{\stateP} \in \confspace\prl{\workspace,\boldsymbol{\Bradius}}$ and so $\dist\prl{\vect{\stateP}_i, H_{ij}} \geq 0$.
One can further show that for any $i \neq j$
\begin{align}
\dist\prl{\vect{\stateP}_i, H_{ij}} &= \Bradius_i + \frac{\norm{\vect{\stateP}_i-\vect{\stateP}_i}^2 + \Bradius_i^2 - \Bradius_j^2 - 2 \Bradius_i\norm{\vect{\stateP}_i-\vect{\stateP}_i}}{2\norm{\vect{\stateP}_i-\vect{\stateP}_i}}, \nonumber \\
& = \Bradius_i + \underbrace{\frac{\prl{\norm{\vect{\stateP}_i-\vect{\stateP}_i} - \Bradius_i}^2 - \Bradius_j^2}{2\norm{\vect{\stateP}_i-\vect{\stateP}_i}}}_{> 0, \text{ since } \vectbf{\stateP} \in \confspace\prl{\workspace,\boldsymbol{\Bradius}} } 
> \Bradius_i, 
\end{align}
which completes the proof.
\end{proof}

To determine a collision free neighborhood of a configuration $\vectbf{\stateP} \sqz{\in} \confspace\!\prl{\workspace, \boldsymbol{\Bradius}}$ with a vector of body radii $\boldsymbol{\Bradius} \sqz{\in} \prl{\R_{\geq0}}^n$, we define a \emph{free subdiagram} $\Fpart\prl{\vectbf{\stateP}, \boldsymbol{\Bradius}} \ldf \crl{\Fcell_1, \Fcell_2, \ldots, \Fcell_n}$  of the body diagram $\Bpart\prl{\vectbf{\stateP}, \boldsymbol{\Bradius}} = \crl{\Bcell_1, \Bcell_2, \ldots, \Bcell_n}$ by eroding each cell removing the volume swept along its boundary, $\partial \Bcell_i$, by the associated body radius, see \reffig{fig.PowerDiagramCollisionFree}, as \cite{haralick_sternberg_zhuang_PAMI1987}~\footnote{Here, $\vectbf{0}$ is a vector of all zeros with the appropriate size, and  $A \oplus B \ldf \crl{a + b ~|~ a \in A, b \in B }$ is the Minkowski sum of sets $A$ and $B$.}
\begin{equation}\label{eq.Fcell}
\!\!\!\Fcell_i \ldf \Bcell_i \setminus \! \prl{\big.\partial \Bcell_i \sqz{\oplus} \disk\!\prl{\vectbf{0}, \Bradius_i}\!} \sqz{=}  \crl{\!\vect{\stateQ} \sqz{\in} \Bcell_i \bigg | \!\min_{\vect{x} \in \partial \Bcell_i}\!\!\norm{\vect{x} \sqz{-} \vect{\stateQ}} \sqz{>} \Bradius_i\!}\!.\!\!\!
\end{equation}  
Note that  $\Fcell_i$ is a nonempty convex set because $\vect{\stateP}_i \in \Fcell_i$ and the erosion of a convex set by a ball is convex. \footnotemark

\footnotetext{It is obvious that the erosion of a half-space by a ball is a half-space. Hence, since the erosion operation is distributed over set intersection \cite{haralick_sternberg_zhuang_PAMI1987}, and  a convex set can be defined as (possibly infinite) intersection of half-spaces \cite{boyd_vandenberghe_ConvexOptimization_2004}, the erosion of a convex set by a ball is convex.}
%\footnote{Note that since the erosion operation is distributed over set intersection \cite{haralick_sternberg_zhuang_PAMI1987}, and  a convex polytope can be defined as a finite intersection of half-spaces, and the erosion of a half-space by a ball is a half-space, the erosion of a convex polytope by a ball is a convex polytope.}

\begin{figure}[t]
\centering
\begin{tabular}{cc}
\includegraphics[width=0.2\textwidth]{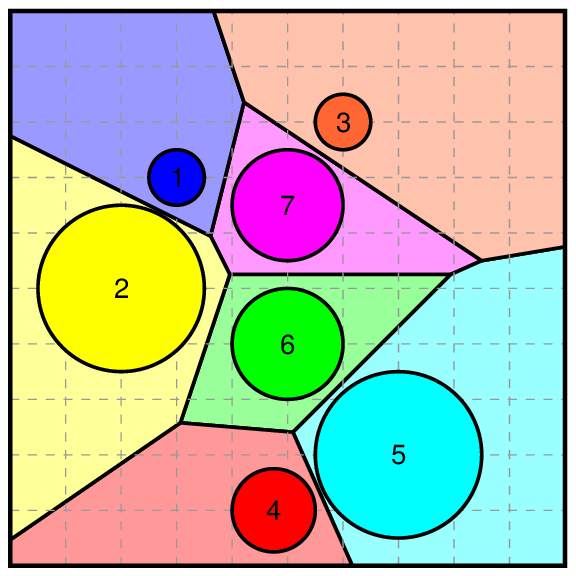} 
&
\includegraphics[width=0.2\textwidth]{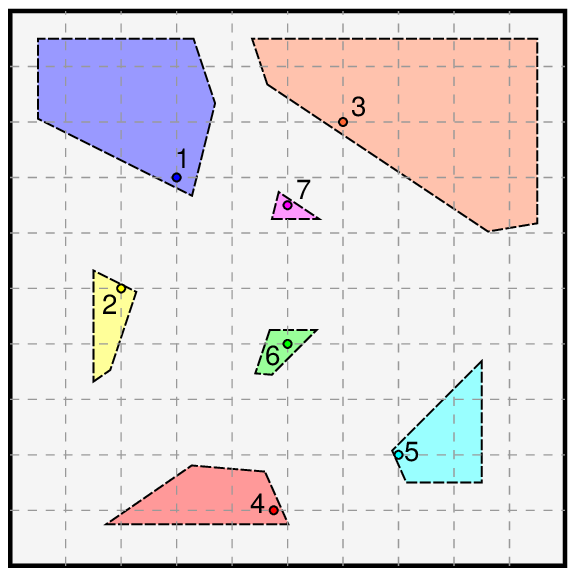} 
\end{tabular}
\vspace{-1mm}
\caption{(left) Encoding collision free configurations via body diagrams: A configuration of disks is nonintersecting iff each disk is contained in the interior of its body cell.
(right) Free subcells, obtained by eroding each body cell with its associated disk radius.}
\label{fig.PowerDiagramCollisionFree}
\end{figure}

The following observation yields a (possibly conservative) convex inner approximation 
of the free configuration space neighborhood surrounding free configuration as
\begin{equation}
\vectbf{\stateP} \in \confspace\prl{\workspace, \boldsymbol{\Bradius}} \Rightarrow 
\prod \Fpart\prl{\vectbf{\stateP}, \boldsymbol{\Bradius}} 
\subset \confspace\prl{\workspace, \boldsymbol{\Bradius}},
\end{equation}
where $\prod \Fpart\prl{\vectbf{\stateP}, \boldsymbol{\Bradius}} = \Fcell_1 \times \Fcell_2 \times \hdots \times \Fcell_n$.

\begin{lemma}\label{lem.LocalCollisionFreeZone}
Let $\vectbf{\stateP} \in \confspace\prl{\workspace, \boldsymbol{\Bradius}}$ be
a multirobot  configuration with a vector of body radii $\boldsymbol{\Bradius} \in \prl{\R_{\geq 0}}^n$, and $\Fpart\prl{\vectbf{\stateP}, \boldsymbol{\Bradius}}$ be the free subdiagram of the body diagram $\Bpart\prl{\vectbf{\stateP}, \boldsymbol{\Bradius}}$.

Then  $\vectbf{\stateQ} \in \workspace^n$ is a collision free  multirobot configuration in $\confspace\prl{\workspace, \boldsymbol{\Bradius}}$ if $\vect{\stateQ}_i \in \Fcell_i$ (i.e. $\disk\prl{\vect{\stateQ}_i, \Bradius_i} \subset \mathring{\Bcell}_i$) for all $i$.
\end{lemma}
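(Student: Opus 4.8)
The plan is to reduce the claim to the sufficiency direction of \refprop{prop.CollisionPowerDiagram}, with the body diagram of the \emph{fixed} configuration $\vectbf{\stateP}$ playing the role of a geometric scaffold. Concretely, I will show that the hypothesis places each displaced body $\disk\prl{\vect{\stateQ}_i, \Bradius_i}$ strictly inside the interior of the cell $\Bcell_i$ of the partition $\Bpart\prl{\vectbf{\stateP}, \boldsymbol{\Bradius}}$, and then read off the two defining inequalities of $\confspace\prl{\workspace, \boldsymbol{\Bradius}}$ from the fact that a cover by sets with mutually disjoint interiors forces its pieces --- and anything living strictly inside them --- to be pairwise disjoint and contained in $\mathring{\workspace}$.

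First I would unpack the hypothesis. By the defining identity \refeqn{eq.Fcell}, $\vect{\stateQ}_i \in \Fcell_i$ is equivalent to $\min_{\vect{x} \in \partial \Bcell_i} \norm{\vect{x} - \vect{\stateQ}_i} > \Bradius_i$, i.e. to $\disk\prl{\vect{\stateQ}_i, \Bradius_i} \subset \mathring{\Bcell}_i$ (this is just the statement that eroding $\Bcell_i$ by $\disk\prl{\vectbf{0}, \Bradius_i}$ deletes exactly the points within distance $\Bradius_i$ of $\partial \Bcell_i$). Next, since $\Bpart\prl{\vectbf{\stateP}, \boldsymbol{\Bradius}}$ is a cover of $\workspace$ whose cells have mutually disjoint interiors: (i) for $i \neq j$ we have $\mathring{\Bcell}_i \cap \mathring{\Bcell}_j = \emptyset$, hence $\disk\prl{\vect{\stateQ}_i, \Bradius_i} \cap \disk\prl{\vect{\stateQ}_j, \Bradius_j} = \emptyset$, which for closed disks is equivalent to $\norm{\vect{\stateQ}_i - \vect{\stateQ}_j} > \Bradius_i + \Bradius_j$; and (ii) $\mathring{\Bcell}_i \subseteq \mathring{\workspace}$ (a point with an open neighborhood inside $\Bcell_i \subseteq \workspace$ is interior to $\workspace$), hence $\disk\prl{\vect{\stateQ}_i, \Bradius_i} \subset \mathring{\workspace}$. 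These are precisely the two conditions in \refeqn{eq.confspace}, so $\vectbf{\stateQ} \in \confspace\prl{\workspace, \boldsymbol{\Bradius}}$.

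The only point requiring genuine care is conceptual rather than computational: the cells $\Bcell_i$ used throughout are those of the \emph{original} configuration $\vectbf{\stateP}$, not of $\vectbf{\stateQ}$, so the argument does not secretly assume what it is proving --- no property of $\Bpart\prl{\vectbf{\stateQ}, \boldsymbol{\Bradius}}$ is invoked, only that the fixed body diagram of $\vectbf{\stateP}$ is a disjoint-interior cover into which each new body has been confined. This is exactly the content of the displayed implication $\vectbf{\stateP} \in \confspace\prl{\workspace, \boldsymbol{\Bradius}} \Rightarrow \prod \Fpart\prl{\vectbf{\stateP}, \boldsymbol{\Bradius}} \subset \confspace\prl{\workspace, \boldsymbol{\Bradius}}$ stated just before the lemma, and it also explains the parenthetical in the statement: $\vect{\stateQ}_i \in \Fcell_i$ and $\disk\prl{\vect{\stateQ}_i, \Bradius_i} \subset \mathring{\Bcell}_i$ are literally the same condition, so there is nothing further to prove beyond steps (i)--(ii).
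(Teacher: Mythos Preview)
Your proposal is correct and follows essentially the same approach as the paper: the paper's proof is a one-line reference to the sufficiency direction of \refprop{prop.CollisionPowerDiagram}, noting that the result follows from $\Bpart\prl{\vectbf{\stateP}, \boldsymbol{\Bradius}}$ being a cover of $\workspace$ with disjoint interiors, and you have simply spelled out that argument in full detail. Your added remark that the cells $\Bcell_i$ are those of the fixed configuration $\vectbf{\stateP}$ (so no circularity is involved) is a helpful clarification the paper leaves implicit.
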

\begin{proof}
The results directly follows from $\Bpart\prl{\vectbf{\stateP}, \boldsymbol{\Bradius}}$ covering~a partition of $\workspace$, as discussed in the proof of  \refprop{prop.CollisionPowerDiagram}. 
\end{proof}

%%%%%%%%%%%%%%%%%%%%%%%%%%%%%%%%%%%%%%%%%%%%
%%%%%%%%%%%%%%%%%%%%%%%%%%%%%%%%%%%%%%%%%%%%
\subsection{Coverage Control of Heterogeneous Disk-Shaped Robots}
\label{sec.move-to-constrained-centroid}
%%%%%%%%%%%%%%%%%%%%%%%%%%%%%%%%%%%%%%%%%%%%
%%%%%%%%%%%%%%%%%%%%%%%%%%%%%%%%%%%%%%%%%%%%

%Based on our observations in \refprop{prop.CollisionPowerDiagram} and \reflem{lem.LocalCollisionFreeZone}, we now introduce a constrained optimization framework combining coverage control and collision avoidance.

Consider a heterogeneous multirobot configuration $\vectbf{\stateP} \in \confspace \prl{\workspace, \boldsymbol{\Bradius} \sqz{+} \boldsymbol{\safemargin}}$ with associated vectors of body radii $\boldsymbol{\Bradius} \in \prl{\R_{\geq 0}}^n$, safety margins $\boldsymbol{\safemargin} \in \prl{\R_{> 0}}^n$ and sensory footprint radii $\boldsymbol{\Sradius} \in \prl{\R_{\geq 0}}^n$, and let $\Spart\prl{\vectbf{\stateP}, \boldsymbol{\Sradius}} = \crl{\Scell_1, \ldots, \Scell_n}$  be the sensory diagram of $\workspace$ based on robot locations $\vectbf{\stateP}$ and sensory footprint radii $\boldsymbol{\Sradius}$, and $\Fpart\prl{\vectbf{\stateP}, \boldsymbol{\Bradius} \sqz{+} \boldsymbol{\safemargin}} = \crl{\Fcell_1, \ldots, \Fcell_n}$ be the free subdiagram  associated with  configuration  $\vectbf{\stateP}$ and enlarged body radii $\boldsymbol{\Bradius} \sqz{+}\boldsymbol{\safemargin}$.
Here we use $\boldsymbol{\safemargin}$ to guarantee the clearance between any pair $i \neq j$ of robots to be at least $\safemargin_i + \safemargin_j$. \footnotemark

\footnotetext{Having a positive vector of safety margins $\boldsymbol{\safemargin}$ enables us to consider collision free configurations in $\overline{\confspace\prl{\workspace, \boldsymbol{\Bradius} \sqz{+} \boldsymbol{\safemargin}}} \subset \confspace\prl{\workspace, \boldsymbol{\Bradius}}$. 
Throughout the rest of the paper, in order the compress the notation,  we will abuse the notation and use  $\confspace\prl{\workspace, \boldsymbol{\Bradius} \sqz{+} \boldsymbol{\safemargin}}$ to refer to the closure of the configuration space in \refeqn{eq.confspace}.}

Now, in contrast to the standard ``move-to-centroid'' law that steers each robot directly towards the centroid, $\ctrd_{\Scell_i}$,  of its sensory cell, $\Scell_i$, we propose a coverage control policy that selects a safe target location, called the \emph{constrained centroid} of $\Scell_i$, that solves the following convex programming~\footnote{Here, $\overline{A}$ is the closure of set $A$.}
\begin{equation}\label{eq.ConstrainedOptimization}
\begin{split}
\text{minimize}  &\quad \norm{\vect{\stateQ}_i - \ctrd_{\Scell_i}}^2 \\
\text{subject to} &\quad  \vect{\stateQ}_i \in \overline{\Fcell}_i
\end{split}
\end{equation}
where $\overline{\Fcell}_i$ is a closed convex set.
It is well known that the unique solution of \refeqn{eq.ConstrainedOptimization} is given by \cite[Section 8.1.1]{boyd_vandenberghe_ConvexOptimization_2004} \footnote{In general, the metric projection of a point onto a convex set can be efficiently computed using a standard convex programming solver \cite{boyd_vandenberghe_ConvexOptimization_2004}.
If $\workspace$ is a convex polytope, then a free subcell, $\Fcell_i$, is also a convex polytope and can be written as a finite intersection of half-spaces.
Hence, the metric projection onto a convex polytope  can be recast as quadratic programming and can be solved in polynomial time \cite{kozlov_tarasov_khachiyan1980}.
In the case of a convex polygonal environment, $\Fcell_i$ is a  convex polygon and the metric projection onto a convex polygon can be solved analytically since the solution lies on one of its edges unless the input point is inside the polygon.
}
\begin{align}\label{eq.ConstrainedCtrd}
\overline{\ctrd}_{\Scell_i} \ldf \left \{ 
\begin{array}{ll}
\ctrd_{\Scell_i} & \text{, if }  \ctrd_{\Scell_i} \in  \overline{\Fcell}_i, \\
\Pi_{\overline{\Fcell}_i}\prl{\ctrd_{\Scell_i}} & \text{, otherwise,}
\end{array}
\right.
\end{align}
where $\Pi_{C}\prl{\vect{x}}$ denotes the metric projection of $\vect{x} \in \R^N$ onto a convex set $C \subset \R^N$, and note that $\Pi_{C}$ is piecewise continuously differentiable \cite{kuntz_scholtes_JMAA1994}.\footnote{
Note that $\ctrd_{\Scell_i}$ is well defined (see footnote \ref{ft.ctrdwelldefined}), hence $\overline{\ctrd}_{\Scell_i}$ must be as well given $\Fcell_i \neq \emptyset$.}  
Accordingly, for the single integrator robot dynamics \refeqn{eq.SystemModel}, our ``move-to-constrained-centroid'' law is defined as
\begin{equation} \label{eq.move2ctrdcollision}
\vect{\ctrlinput}_i = - \ctrlgain\prl{\vect{\stateP}_i - \overline{\ctrd}_{\Scell_i}},
\end{equation}
where $\ctrlgain \in \R_{> 0}$ is a fixed control gain, and we assume that $\Spart\prl{\vectbf{\stateP}, \boldsymbol{\Sradius}}$ and $\Fpart\prl{\vectbf{\stateP}, \boldsymbol{\Bradius} \sqz{+}\boldsymbol{\safemargin}}$ are continuously updated.
We find it convenient to have  $\goal_{\Spart}\prl{\workspace, \boldsymbol{\Bradius} \sqz{+} \boldsymbol{\safemargin}, \boldsymbol{\Sradius}}$ denote the set of equilibria of our ``move-to-constrained-centroid'' law where robots are located at the constrained centroid of their respective sensory cells, \footnote{ Note that this set cannot be empty since it contains the minima of a smooth function over a compact set \refeqn{eq.ConstrainedOptimization}.}
\begin{equation}
\!\goal_{\Spart}\!\prl{\workspace, \boldsymbol{\Bradius} \sqz{+} \boldsymbol{\safemargin}, \boldsymbol{\Sradius}} \sqz{\ldf} \crl{ \vectbf{\stateP} \sqz{\in} \confspace\prl{\workspace, \boldsymbol{\Bradius} \sqz{+} \boldsymbol{\safemargin}} \! \Big|  \,\vect{\stateP}_i \sqz{=} \overline{\ctrd}_{\Scell_i} ~ \forall i  }\!. \!\!
\end{equation}
In the special case of identical sensory footprint radii, i.e. $\Sradius_i = \Sradius_j$ for all $i \neq j$, these stationary configurations are called the constrained centroidal Voronoi configurations \cite{du_gunzburger_ju_JOSC2003}. 
Also note that for homogeneous point robots our ``move-to-constrained-centroid'' law in \refeqn{eq.move2ctrdcollision} simplifies back to the standard ``move-to-centroid'' law in \refeqn{eq.move2ctrdhomogeneous}.

We summarize the qualitative properties of our ``move-to-constrained-centroid'' law  as follows:
\begin{theorem}\label{thm.CollisionQualitative}
%Let  $\vectbf{\radius} \sqz{\in} \prl{\R_{\geq 0}}^n$,  $\boldsymbol{\alpha} \sqz{\in} \prl{\R_{> 0}}^n$ and $\boldsymbol{\powerradius} \sqz{\in} \prl{\R_{\geq 0}}^n$ be vectors of body radii, safety margins and power radii, respectively.
For any choice of vectors of body radii $\boldsymbol{\Bradius} \in \prl{\R_{\geq 0}}^n$,  safety margin  $\boldsymbol{\safemargin} \sqz{\in} \prl{\R_{> 0}}^n$ and  sensory footprint radii $\boldsymbol{\Sradius} \sqz{\in} \prl{\R_{\geq 0}}^n$, the configuration space of nonintersecting disks $\confspace\!\prl{\workspace, \boldsymbol{\Bradius} \sqz{+} \boldsymbol{\safemargin}}$ \refeqn{eq.confspace} is positive invariant under the ``move-to-constrained-centroid'' law  in \refeqn{eq.move2ctrdcollision} whose unique, continuous and piecewise differentiable flow, starting at any configuration in $\confspace\!\prl{\workspace, \boldsymbol{\Bradius} \sqz{+} \boldsymbol{\safemargin}}$,  asymptotically reaches a locally optimal sensing configuration in $\goal_{\Spart}\!\prl{\workspace, \boldsymbol{\Bradius} \sqz{+} \boldsymbol{\safemargin}, \boldsymbol{\Sradius}}$ while strictly decreasing the utility function $\flocopt_{\Spart}\!\prl{\cdot, \boldsymbol{\Sradius}}$ \refeqn{eq.flocopt_heterogeneous}  along the way.
If an equilibrium in $\goal_{\Spart}\!\prl{\workspace, \boldsymbol{\Bradius} \sqz{+} \boldsymbol{\safemargin}, \boldsymbol{\Sradius}}$ is isolated, then it is locally asymptotically stable.
\end{theorem}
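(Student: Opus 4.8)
The plan is to read the ``move-to-constrained-centroid'' law \refeqn{eq.move2ctrdcollision} as a set-constrained gradient flow and to certify every assertion with the coverage functional $\flocopt_{\Spart}\prl{\cdot, \boldsymbol{\Sradius}}$ itself, after first settling well-posedness and forward invariance. For well-posedness, the right-hand side $\vectbf{\stateP} \mapsto -\ctrlgain\prl{\vect{\stateP}_i - \overline{\ctrd}_{\Scell_i}}$ is continuous and piecewise continuously differentiable on $\confspace\prl{\workspace, \boldsymbol{\Bradius} \sqz{+} \boldsymbol{\safemargin}}$, since the sensory-cell centroid $\ctrd_{\Scell_i}$ is piecewise $C^1$ in $\vectbf{\stateP}$ \cite{bullo_cortes_martinez_DistributedControlRoboticNetworks_2009}, the free subcell $\overline{\Fcell}_i$ (the body cell $\Bcell_i$ with its boundary collar of width $\Bradius_i \sqz{+} \safemargin_i$ removed, \refeqn{eq.Fcell}) is a finite intersection of half-spaces and a shrunken copy of $\workspace$ whose defining data vary smoothly with $\vectbf{\stateP}$ --- hence $\overline{\Fcell}_i$ varies Lipschitz-continuously in Hausdorff distance --- and the metric projection is nonexpansive in its argument and piecewise $C^1$ \cite{kuntz_scholtes_JMAA1994}. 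Thus the field is locally Lipschitz and the Picard--Lindel\"{o}f theorem gives a unique local solution; once invariance is proved the solution remains in the compact set $\confspace\prl{\workspace, \boldsymbol{\Bradius} \sqz{+} \boldsymbol{\safemargin}}$ and so extends to a unique, continuous, piecewise $C^1$ flow on $\brl{0, \infty}$.

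For forward invariance, fix $\vectbf{\stateP} \in \confspace\prl{\workspace, \boldsymbol{\Bradius} \sqz{+} \boldsymbol{\safemargin}}$ and consider the product $\overline{\Fcell}_1 \times \cdots \times \overline{\Fcell}_n$ of free subcells of $\Bpart\prl{\vectbf{\stateP}, \boldsymbol{\Bradius} \sqz{+} \boldsymbol{\safemargin}}$. It is closed and convex (each $\overline{\Fcell}_i$ is the closure of the convex set \refeqn{eq.Fcell}); it contains $\vectbf{\stateP}$, since $\vect{\stateP}_i \in \overline{\Fcell}_i$ for all $i$ by \refprop{prop.CollisionPowerDiagram}; it contains the vector of constrained centroids, since $\overline{\ctrd}_{\Scell_i} = \Pi_{\overline{\Fcell}_i}\prl{\ctrd_{\Scell_i}} \in \overline{\Fcell}_i$; and, by \reflem{lem.LocalCollisionFreeZone}, it is contained in $\confspace\prl{\workspace, \boldsymbol{\Bradius} \sqz{+} \boldsymbol{\safemargin}}$. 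Since the velocity $\dot{\vectbf{\stateP}}$, whose components are $\dot{\vect{\stateP}}_i = \ctrlgain\prl{\overline{\ctrd}_{\Scell_i} - \vect{\stateP}_i}$, points from $\vectbf{\stateP}$ toward the point $\prl{\overline{\ctrd}_{\Scell_1}, \ldots, \overline{\ctrd}_{\Scell_n}}$ of this convex subset, it lies in the subset's tangent cone at $\vectbf{\stateP}$, hence --- tangent cones being monotone under set inclusion --- in the tangent cone of $\confspace\prl{\workspace, \boldsymbol{\Bradius} \sqz{+} \boldsymbol{\safemargin}}$ at $\vectbf{\stateP}$. As this subtangentiality holds at every point, Nagumo's viability theorem yields forward invariance. (Concretely, where a constraint $\norm{\vect{\stateP}_i - \vect{\stateP}_j} = \prl{\Bradius_i \sqz{+} \safemargin_i} \sqz{+} \prl{\Bradius_j \sqz{+} \safemargin_j}$ is tight, $\vect{\stateP}_i$ and $\vect{\stateP}_j$ lie on the bounding hyperplanes of $\overline{\Fcell}_i$ and $\overline{\Fcell}_j$, which are orthogonal to $\vect{\stateP}_i - \vect{\stateP}_j$, so $\tfrac{d}{dt}\norm{\vect{\stateP}_i - \vect{\stateP}_j}^2 \geq 0$; the constraints from $\partial \workspace$ are analogous.)

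With invariance in hand, take $\flocopt_{\Spart}\prl{\cdot, \boldsymbol{\Sradius}}$ --- which is $C^1$ with gradient \refeqn{eq.HeterogeneousGradient} --- as a Lyapunov function. Along the flow it is absolutely continuous with, for a.e. $t$,
\begin{align*}
\ddt{\flocopt_{\Spart}\prl{\vectbf{\stateP}, \boldsymbol{\Sradius}}} &= -2\ctrlgain \sum_{i=1}^{n} \mass_{\Scell_i} \innerprod{\vect{\stateP}_i - \ctrd_{\Scell_i}}{\vect{\stateP}_i - \overline{\ctrd}_{\Scell_i}} \\
&\leq -2\ctrlgain \sum_{i=1}^{n} \mass_{\Scell_i} \norm{\vect{\stateP}_i - \overline{\ctrd}_{\Scell_i}}^2 \leq 0 ,
\end{align*}
where the inequality combines the identity $\innerprod{\vect{\stateP}_i - \ctrd_{\Scell_i}}{\vect{\stateP}_i - \overline{\ctrd}_{\Scell_i}} = \norm{\vect{\stateP}_i - \overline{\ctrd}_{\Scell_i}}^2 - \innerprod{\ctrd_{\Scell_i} - \overline{\ctrd}_{\Scell_i}}{\vect{\stateP}_i - \overline{\ctrd}_{\Scell_i}}$ with the variational inequality $\innerprod{\ctrd_{\Scell_i} - \overline{\ctrd}_{\Scell_i}}{\vect{\stateP}_i - \overline{\ctrd}_{\Scell_i}} \leq 0$ (valid because $\overline{\ctrd}_{\Scell_i} = \Pi_{\overline{\Fcell}_i}\prl{\ctrd_{\Scell_i}}$ and $\vect{\stateP}_i \in \overline{\Fcell}_i$). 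Equality holds iff $\vect{\stateP}_i = \overline{\ctrd}_{\Scell_i}$ for every $i$ (when $\mass_{\Scell_i} = 0$ the convention $\ctrd_{\Scell_i} = \vect{\stateP}_i$ forces $\overline{\ctrd}_{\Scell_i} = \vect{\stateP}_i$), i.e. iff $\vectbf{\stateP} \in \goal_{\Spart}\prl{\workspace, \boldsymbol{\Bradius} \sqz{+} \boldsymbol{\safemargin}, \boldsymbol{\Sradius}}$, which is precisely the (invariant) equilibrium set. Since every trajectory is precompact and $\flocopt_{\Spart}$ is nonincreasing and bounded below, LaSalle's invariance principle shows the $\omega$-limit set of each trajectory is a nonempty connected subset of $\goal_{\Spart}\prl{\workspace, \boldsymbol{\Bradius} \sqz{+} \boldsymbol{\safemargin}, \boldsymbol{\Sradius}}$ and that $\flocopt_{\Spart}$ strictly decreases until an equilibrium is reached. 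If moreover $\vectbf{\stateP}^* \in \goal_{\Spart}\prl{\workspace, \boldsymbol{\Bradius} \sqz{+} \boldsymbol{\safemargin}, \boldsymbol{\Sradius}}$ is isolated, choosing a compact neighborhood $N$ with $N \cap \goal_{\Spart}\prl{\workspace, \boldsymbol{\Bradius} \sqz{+} \boldsymbol{\safemargin}, \boldsymbol{\Sradius}} = \crl{\vectbf{\stateP}^*}$ gives $\ddt{\flocopt_{\Spart}} < 0$ on $N \setminus \crl{\vectbf{\stateP}^*}$, so local asymptotic stability of $\vectbf{\stateP}^*$ follows from LaSalle's invariance principle applied on $N$.

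I expect the forward-invariance step to be the conceptual crux: the free subcells $\overline{\Fcell}_i$ move with the configuration, so one cannot merely project the dynamics onto a fixed feasible set; the saving observation is that the velocity always points into the \emph{current} convex free-cell product, which by \reflem{lem.LocalCollisionFreeZone} is already collision free. The heaviest routine bookkeeping is the regularity claim --- continuity, piecewise differentiability, and uniqueness of the flow across the combinatorial transitions of the body and sensory diagrams --- underpinned by the cited smoothness of power-diagram centroids \cite{bullo_cortes_martinez_DistributedControlRoboticNetworks_2009} and of the metric projection \cite{kuntz_scholtes_JMAA1994}.
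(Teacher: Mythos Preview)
Your argument is correct and follows the paper's overall strategy: forward invariance via the free-subcell product and \reflem{lem.LocalCollisionFreeZone}, then $\flocopt_{\Spart}$ as a Lyapunov function together with LaSalle. Your Lyapunov step is in fact slightly cleaner than the paper's --- you invoke the variational inequality $\innerprod{\ctrd_{\Scell_i} - \overline{\ctrd}_{\Scell_i}}{\vect{\stateP}_i - \overline{\ctrd}_{\Scell_i}} \leq 0$ directly, whereas the paper uses the equivalent optimality bound $\norm{\vect{\stateP}_i - \ctrd_{\Scell_i}}^2 \geq \norm{\overline{\ctrd}_{\Scell_i} - \ctrd_{\Scell_i}}^2$ and a polarization identity; and you phrase invariance via Nagumo's theorem where the paper argues informally that the velocity is inward-pointing.

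The one substantive difference is in the well-posedness step. You assert directly that the field is piecewise $C^1$ and locally Lipschitz, deferring the ``combinatorial transitions'' to bookkeeping. The paper is more explicit here: because sensory power cells can have empty interior (an occupancy defect), the centroid $\ctrd_{\Scell_i}$ need not vary continuously through such a transition even with the convention $\ctrd_{\Scell_i} = \vect{\stateP}_i$, so the cited smoothness result from \cite{bullo_cortes_martinez_DistributedControlRoboticNetworks_2009} does not apply globally. The paper therefore recasts \refeqn{eq.move2ctrdcollision} as a hybrid system --- a family of local controllers $\vectbf{\ctrlinput}^I$ indexed by the subset $I \subseteq \crl{1,\ldots,n}$ of robots whose sensory cells currently have nonempty interior, with open domains $\localdomain_I$ covering $\confspace\prl{\workspace, \boldsymbol{\Bradius} \sqz{+} \boldsymbol{\safemargin}}$ --- establishes piecewise $C^1$ regularity and Lipschitzness on each $\localdomain_I$, and then pieces the local flows together. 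Your approach is more economical but leans on a regularity claim that, at the level of detail you give, does not yet cover those degenerate transitions; the paper's hybrid decomposition is what buys a clean existence/uniqueness statement there.
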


\begin{proof}

%Since the line segment joining each robot position  and the associated constrained centroid is always free of collisions in $\workspace$ (\reflem{lem.LocalCollisionFreeZone}), a robot never crosses the boundary of $\confspace\prl{\workspace, \vectbf{\radius} \sqz{+} \boldsymbol{\alpha}}$, it either stays on the boundary or moves towards the interior of the configuration space.

The instantaneous "target" in \refeqn{eq.move2ctrdcollision} lies in the closure of the convex inner approximation to the freespace neighborhood of any free configuration,    $\overline{\ctrd}_{\Spart\prl{\vectbf{\stateP}, \boldsymbol{\Sradius}}} \in \prod \Fpart\prl{\vectbf{\stateP}, \boldsymbol{\Bradius} \sqz{+} \boldsymbol{\safemargin}} \subset \confspace\prl{\workspace, \boldsymbol{\Bradius} \sqz{+} \boldsymbol{\safemargin}}$, hence, according to \reflem{lem.LocalCollisionFreeZone}, the configuration space  tangent vector defined by \refeqn{eq.move2ctrdcollision},
$-\ctrlgain\prl{\vectbf{\stateP} - \overline{\ctrd}_{\Spart\prl{\vectbf{\stateP}, \boldsymbol{\Sradius}}}} \in T_{\vectbf{\stateP}} \confspace\prl{\workspace, \boldsymbol{\Bradius}\sqz{+}\boldsymbol{\safemargin}}$, is either interior directed or, at worse, tangent to the boundary of $\prod \Fpart\prl{\vectbf{\stateP}, \boldsymbol{\Bradius}\sqz{+}\boldsymbol{\safemargin}}$. 
Therefore, by construction \refeqn{eq.ConstrainedOptimization}, the ``move-to-constrained-centroid'' law leaves  $\confspace\!\prl{\workspace,\boldsymbol{\Bradius}\sqz{+}\boldsymbol{\safemargin}}$ positively invariant.

The existence, uniqueness and continuity of its flow can be observed using an equivalent hybrid system consisting of a family of piecewise continuously differentiable local vector fields as follows.
Let $\vectbf{\ctrlinput}^I : \localdomain_I \rightarrow \prl{\R^N}^n$ be a local controller associated with a subset $I$ of $\crl{1,2, \ldots, n}$ defined as
\begin{equation}
\vect{\ctrlinput}^I_i = \left \{
\begin{array}{@{}c@{\hspace{1mm}}l@{}}
 - k\prl{\vect{\stateP_i} - \overline{\ctrd}_{\Scell_i}} & \text{, if } i \in I \\
 \vectbf{0} & \text{, otherwise},
\end{array}
\right .
\end{equation}
where  its domain is 
\begin{equation}
\localdomain_{I} \ldf \crl{ \vectbf{\stateP} \sqz{\in} \confspace\prl{\workspace, \boldsymbol{\Bradius} \sqz{+} \boldsymbol{\safemargin}} \! \Big | \, \mathring{\Scell}_i \neq \emptyset \quad \forall i \in I }.
\end{equation}
Note that for a given configuration in its domain, $\localdomain_I$,  a local policy index, $I$, indicates which robots are assigned to sensory cells with nonempty interiors, and so the domains, $\localdomain_I$, of local controllers defines a finite open cover of    $\confspace\prl{\workspace, \boldsymbol{\Bradius} \sqz{+} \boldsymbol{\safemargin}}$.
Hence, since all unassigned robots are stationary under the ``move-to-constrained-centroid'' law and every robot whose sensory cell  has a nonempty interior is assigned to the coverage task,  one can further conclude that these local controllers can be composed using the policy selection strategy, $g:\confspace\prl{\workspace, \boldsymbol{\Bradius} \sqz{+} \boldsymbol{\safemargin}} \rightarrow \mathbb{P}\prl{n}$ maximizing the number of assigned robots,\footnote{Here $\mathbb{P}\prl{n}$ denotes the set of all subsets of $\crl{1,2, \ldots, n}$.}
%\kodcom{This equivalence holds because the ``move-to-constrained-centroid'' law assigns every robot whose power partition cell has nonempty interior to the coverage task, i.e. it always maximizes the number of assigned robots. Also, for assigned and unassigned robots, local controllers and our ``move-to-constrained-centroid'' law  set the velocity inputs exactly the same.}
%
\begin{equation}
g\prl{\vectbf{\stateP}} \ldf \argmax_{\substack{I \subseteq \crl{1,\ldots,n}\\ \vectbf{\stateP} \in \localdomain_I}} \card{I}.
\end{equation}
such that the resulting hybrid vector field is the same as the ``move-to-constrained-centroid'' law in \refeqn{eq.move2ctrdcollision}, i.e. for any $\vectbf{\stateP} \in \confspace\prl{\workspace, \boldsymbol{\Bradius} \sqz{+} \boldsymbol{\safemargin}}$ 
\begin{align}
\vectbf{\ctrlinput}\prl{\vectbf{\stateP}} = \vectbf{\ctrlinput}^{g\prl{\vectbf{\stateP}}}\prl{\vectbf{\stateP}}.
\end{align}
Note that, since a sensory cell with a nonempty interior can not instantaneously appear or disappear under any continuous motion, each time when a local controller is selected by $g$ it steers the robots for a nonzero time.

Now the continuity properties of each local control policy can be observed as follows. 
As in the case of Voronoi diagrams \cite{bullo_cortes_martinez_DistributedControlRoboticNetworks_2009}, we have that the boundary of  a sensory cell with a nonempty interior is a piecewise continuously differentiable function of robot locations, and  its centroid  is continuously differentiable with respect to robot locations. 
Similarly, the boundary of each element of $\Fpart\prl{\vectbf{\stateP}, \boldsymbol{\Bradius}\sqz{+}\boldsymbol{\safemargin}}$ is piecewise continuously differentiable since each free cell is a nonempty erosion of an element of the body diagram  $\Bpart\prl{\vectbf{\stateP}, \boldsymbol{\Bradius}+ \boldsymbol{\safemargin}}$ by a fixed closed ball.  
Hence, one can conclude that each local control policy is piecewise continuously differentiable since metric projections onto convex cells are piecewise continuously differentiable \cite{kuntz_scholtes_JMAA1994} and the composition of piecewise continuously differentiable functions are also piecewise continuously differentiable \cite{chaney_NA1990}.

Therefore, the existence, uniqueness and continuously differentiability of the flow of  each local controller $\vectbf{\ctrlinput}^I$ follow from the Lipschitz continuity of $\vectbf{\ctrlinput}^I$  in its compact domain $\localdomain_I$  since a piecewise continuously differentiable function is also locally Lipschitz on its domain \cite{chaney_NA1990} and a locally Lipschitz function on a compact set is globally Lipschitz on that set \cite{khalil_NonlinearSystems_2001}.
Hence, since their domains define a finite open
cover of $\confspace\prl{\workspace, \boldsymbol{\Bradius} + \boldsymbol{\safemargin}}$, the unique, continuous and piecewise differentiable flow of the ``move-to-constrained-centroid'' law is constructed by piecing together trajectories of these local policies.

Finally, a natural choice of a Lyapunov function for the stability analysis is the continuously differentiable location optimization function $\flocopt_{\Spart}$ \refeqn{eq.flocopt_heterogeneous}, and one can verify from \refeqn{eq.HeterogeneousGradient}, \refeqn{eq.ConstrainedOptimization} and \refeqn{eq.move2ctrdcollision} that for any $\vectbf{\stateP} \in \confspace\prl{\workspace, \boldsymbol{\Bradius} \sqz{+} \boldsymbol{\safemargin}}$ \footnote{$\tr{\mat{A}}$ is the transpose of matrix $\mat{A}$.}
\begin{align}
\dot{\flocopt}_{\Spart}\prl{\vectbf{\stateP}, \boldsymbol{\Sradius}} &= - k \sum_{i =1}^{n}\mass_{\Scell_i}\hspace{-8mm}\underbrace{2\vectprod{\prl{\vect{\stateP}_i - \ctrd_{\Scell_i}}}{\prl{\vect{\stateP}_i - \overline{\ctrd}_{\Scell_i}}}}_{\substack{\geq \norm{\vect{\stateP}_i - \overline{\ctrd}_{\Scell_i}}^2, \\ \text{ since } \vect{\stateP}_i \in \Fcell_i \text{ and } \norm{\vect{\stateP}_i - \ctrd_{\Scell_i}}^2 \geq \norm{\overline{\ctrd}_{\Scell_i} - \ctrd_{\Scell_i}}^2} } \hspace{-7mm}, \\
& \leq - k \sum_{i =1}^{n}\mass_{\Scell_i} \norm{\vect{\stateP}_i - \overline{\ctrd}_{\Scell_i}}^2 \leq 0, 
\end{align} 
%
%{\small
%\begin{align}
%\norm{\overline{\ctrd}_{\Pcell_i} \sqz{-} \ctrd_{\Pcell_i}}^2 &= \norm{\overline{\ctrd}_{\Pcell_i} - \vect{\stateP}_i + \vect{\stateP}_i - \ctrd_{\Pcell_i}}^2, \nonumber\\
%&= \norm{\overline{\ctrd}_{\Pcell_i} \sqz{-} \vect{\stateP}_i}^2 + \norm{\vect{\stateP}_i \sqz{-} \ctrd_{\Pcell_i}}^2 - 2\vectprod{\prl{\vect{\stateP}_i \sqz{-} \ctrd_{\Pcell_i}}}{\prl{\vect{\stateP}_i \sqz{-} \overline{\ctrd}_{\Pcell_i}}}, \nonumber
%\end{align}
%\begin{align}
%2\vectprod{\prl{\vect{\stateP}_i \sqz{-} \ctrd_{\Pcell_i}}}{\prl{\vect{\stateP}_i \sqz{-} \overline{\ctrd}_{\Pcell_i}}} & = \norm{\overline{\ctrd}_{\Pcell_i} \sqz{-} \vect{\stateP}_i}^2 + \underbrace{\norm{\vect{\stateP}_i \sqz{-} \ctrd_{\Pcell_i}}^2 - \norm{\overline{\ctrd}_{\Pcell_i} - \ctrd_{\Pcell_i}}^2}_{\geq 0 \text{ since $\overline{\ctrd}_{\Pcell_i}$ solves  \refeqn{eq.ConstrainedOptimization}.}},  \nonumber\\
%& \geq \norm{\overline{\ctrd}_{\Pcell_i} \sqz{-} \vect{\stateP}_i}^2. \nonumber
%\end{align}
%}%
%
which is equal to $0$ only if  $\vect{\stateP}_i = \overline{\ctrd}_{\Scell_i}$ for all $i$, i.e. $\vectbf{\stateP} \in \goal_{\Spart}\prl{\workspace, \boldsymbol{\Bradius} \sqz{+} \boldsymbol{\safemargin}, \boldsymbol{\Sradius}}$. 
Thus, it follows from LaSalle's Invariance Principle \cite{khalil_NonlinearSystems_2001} that all multirobot configurations in $\confspace\prl{\workspace, \boldsymbol{\Bradius} \sqz{+} \boldsymbol{\safemargin}}$ asymptotically reach $\goal_{\Spart}\prl{\workspace, \boldsymbol{\Bradius} \sqz{+} \boldsymbol{\safemargin}, \boldsymbol{\Sradius}}$.   
If an equilibrium $\vectbf{\stateP}^*$ in $\goal_{\Spart}\prl{\workspace, \boldsymbol{\Bradius} \sqz{+} \boldsymbol{\safemargin}, \boldsymbol{\Sradius}}$ is isolated, then it is guaranteed that $\dot{\flocopt}_{\Spart}\prl{\vectbf{\stateP}, \boldsymbol{\Sradius}} < 0$ in a neighborhood of $\vectbf{\stateP}^*$, and so it is locally asymptotically stable \cite{hirsch_smale_devaney_DifferentialEquationsDynamicalSystems_2003}.
\end{proof}

%As a final remark we would like to emphasize that, since its construction is only based on power partitions,  our ``move-to-constrained-centroid'' law is distributed in the sense of planar triangulations \cite{aurenhammer_JoC1987, kwok_martinez_IJRNC2010}.

%%%%%%%%%%%%%%%%%%%%%%%%%%%%%%%%%%%%%%%%%%%%%
%%%%%%%%%%%%%%%%%%%%%%%%%%%%%%%%%%%%%%%%%%%%%
\subsection{Congestion Control of Unassigned Robots}
\label{sec.UnassignedRobots}
%%%%%%%%%%%%%%%%%%%%%%%%%%%%%%%%%%%%%%%%%%%%%
%%%%%%%%%%%%%%%%%%%%%%%%%%%%%%%%%%%%%%%%%%%%%
 
In this subsection we shall present a heuristic congestion management strategy for unassigned robots that improves assigned robots' progress. 

For a choice of vectors of body radii $\boldsymbol{\Bradius} \sqz{\in} \prl{\R_{\geq 0}}^n$, safety margins $\boldsymbol{\safemargin} \sqz{\in} \prl{\R_{>0}}^n$ and sensory footprint radii $\boldsymbol{\Sradius} \sqz{\in} \prl{\R_{\geq0}}^n $, let $\vectbf{\stateP} \in \confspace\prl{\workspace, \boldsymbol{\Bradius}\sqz{+}\boldsymbol{\safemargin}}$ be a multirobot configuration in $\workspace$ with the associated body diagram $\Bpart\prl{\vectbf{\stateP}, \boldsymbol{\Bradius}\sqz{+}\boldsymbol{\safemargin}} = \crl{\Bcell_1, \ldots, \Bcell_n}$, free subdiagram $\Fpart\prl{\vectbf{\stateP}, \boldsymbol{\Bradius} \sqz{+} \boldsymbol{\safemargin}} = \crl{\Fcell_1, \ldots, \Fcell_n}$ and sensory diagram $\Spart\prl{\vectbf{\stateP}, \boldsymbol{\Sradius}} = \crl{\Scell_1, \ldots, \Scell_n}$.

Consider the following heuristic management of robots: 
if $i$th robot has a sensory cell $\Pcell_i$ with a nonempty interior, then it is assigned to the coverage task with sensory cell $\Scell_i$; otherwise, since the robot becomes redundant for the coverage task, it is assigned to move towards a safe location in $\Bcell_i$.   
We therefore define the set of \emph{``active''} domains $\Apart\prl{\vectbf{\stateP}, \boldsymbol{\Bradius}\sqz{+}\boldsymbol{\safemargin}, \boldsymbol{\Sradius}} = \crl{\Acell_1, \Acell_2, \ldots, \Acell_n}$ of robots as
\begin{equation} \label{eq.Acell}
\Acell_i \ldf \left\{
\begin{array}{ll}
\Scell_i & \text{, if } \mathring{\Scell_i} \neq \emptyset, \\
\Bcell_i & \text{, otherwise.}
\end{array}
\right.
\end{equation}
Note that $\Apart\prl{\vectbf{\stateP}, \boldsymbol{\Bradius}\sqz{+}\boldsymbol{\safemargin}, \boldsymbol{\Sradius}}$ defines a cover of $\workspace$ and its elements have nonempty interior  for all $\vectbf{\stateP} \in \confspace\prl{\workspace, \boldsymbol{\Bradius} \sqz{+} \boldsymbol{\safemargin}}$ (\refprop{prop.CollisionPowerDiagram}).

For the first order robot dynamics \refeqn{eq.SystemModel}, we propose the following ``move-to-constrained-active-centroid'' law
\begin{equation} \label{eq.move2ctrdactive}
\vect{\ctrlinput}_i = - \ctrlgain\prl{\vect{\stateP}_i - \overline{\ctrd}_{\Acell_i}},
\end{equation}
that steers each robot towards the constrained centroid, $\overline{\ctrd}_{\Acell_i}$ as defined in \refeqn{eq.ConstrainedCtrd}, of its active domain, $\Acell_i$, which is the closest point in $\overline{\Fcell}_i$ to the centroid $\ctrd_{\Acell_i}$ and so uniquely solves \cite{boyd_vandenberghe_ConvexOptimization_2004}
\begin{equation}\label{eq.ConstrainedOptimizationActive}
\begin{split}
\text{minimize}  &\quad \norm{\vect{\stateQ}_i - \ctrd_{\Acell_i}}^2 \\
\text{subject to} &\quad  \vect{\stateQ}_i \in \overline{\Fcell}_i
\end{split}
\end{equation}
where $\overline{\Fcell}_i$ is convex and $\ctrlgain \in \R_{> 0}$ is a fixed control gain. 
Once again, we assume that $\Apart\prl{\vectbf{\stateP}, \boldsymbol{\Bradius} \sqz{+}\boldsymbol{\safemargin}, \boldsymbol{\Sradius}}$ and $\Fpart\prl{\vectbf{\stateP}, \boldsymbol{\Bradius} \sqz{+}\boldsymbol{\safemargin}}$ are continuously updated.
It is also useful to have  $\goal_{\Apart}\prl{\workspace, \boldsymbol{\Bradius} \sqz{+} \boldsymbol{\safemargin}, \boldsymbol{\Sradius}}$ denote the set of equilibria of the ``move-to-constrained-active-centroid'' law where robots are located at the constrained centroid of their active domains,
\begin{equation}
\!\! \goal_{\Apart}\!\prl{\workspace, \boldsymbol{\Bradius} \sqz{+} \boldsymbol{\safemargin}, \boldsymbol{\Sradius}} \sqz{\ldf} \crl{ \vectbf{\stateP} \sqz{\in} \confspace\prl{\workspace, \boldsymbol{\Bradius} \sqz{+} \boldsymbol{\safemargin}} \! \Big|  \,\vect{\stateP}_i \sqz{=} \overline{\ctrd}_{\Acell_i} ~ \forall i}\!.\!\!\!
\end{equation}

We summarize some important  properties of our ``move-to-constrained-active-centroid'' law as follows:
\begin{proposition}\label{thm.ActiveQualitative}
For any  $\boldsymbol{\Bradius}, \boldsymbol{\Sradius} \sqz{\in} \prl{\R_{\geq 0}}^n$ and  $\boldsymbol{\safemargin} \sqz{\in} \prl{\R_{> 0}}^n$,  
%For any choice of vectors of body radii $\vectbf{\radius} \sqz{\in} \prl{\R_{\geq 0}}^n$,  safety margins $\boldsymbol{\alpha} \sqz{\in} \prl{\R_{> 0}}^n$ and power radii $\boldsymbol{\powerradius} \sqz{\in} \prl{\R_{\geq 0}}^n$, 
the ``move-to-constrained-active-centroid'' law in \refeqn{eq.move2ctrdactive}
leaves the configuration space of  nonintersecting disks $\confspace\!\prl{\workspace, \boldsymbol{\Bradius} \sqz{+} \boldsymbol{\safemargin}}$  positively invariant; and its unique, continuous and piecewise  differentiable flow, starting at any configuration in $\confspace\!\prl{\workspace, \boldsymbol{\Bradius} \sqz{+} \boldsymbol{\safemargin}}$,  asymptotically reaches $\goal_{\Apart}\!\prl{\workspace, \boldsymbol{\Bradius} \sqz{+} \boldsymbol{\safemargin}, \boldsymbol{\Sradius}}$ without increasing the utility function $\flocopt_{\Spart}\!\prl{\cdot, \boldsymbol{\Sradius}}$ \refeqn{eq.flocopt_heterogeneous} along the way.
\end{proposition}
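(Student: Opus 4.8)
The plan is to follow, nearly verbatim, the three-part structure of the proof of \refthm{thm.CollisionQualitative} --- positive invariance, well-posedness of the flow, and a Lyapunov/LaSalle argument --- isolating the one genuinely new feature of \refeqn{eq.move2ctrdactive}: unassigned robots are no longer frozen, so the sensory cost will certify only monotonicity and a secondary descent argument is needed to pin down the full limit set $\goal_{\Apart}$. For positive invariance, the target $\ol{\ctrd}_{\Acell_i}$ solving \refeqn{eq.ConstrainedOptimizationActive} lies in $\ol{\Fcell}_i$, so the full target vector lies in $\prod\Fpart\prl{\vectbf{\stateP},\boldsymbol{\Bradius}+\boldsymbol{\safemargin}} \subset \confspace\prl{\workspace,\boldsymbol{\Bradius}+\boldsymbol{\safemargin}}$, and by \reflem{lem.LocalCollisionFreeZone} the vector field \refeqn{eq.move2ctrdactive} is interior directed or at worst tangent to this convex inner approximation of the free configuration space, exactly as in \refthm{thm.CollisionQualitative}.

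For existence, uniqueness, continuity and piecewise differentiability of the flow I would reuse the hybrid-system construction of \refthm{thm.CollisionQualitative} almost unchanged, indexing local controllers by the set $I$ of robots whose sensory cells have nonempty interior and composing them by the selection rule that maximizes $\card{I}$; the only modification is that a robot $i\notin I$ now tracks $\ol{\ctrd}_{\Bcell_i}$ rather than standing still. Since every body cell $\Bcell_i$ has a nonempty interior for every $\vectbf{\stateP}\in\confspace\prl{\workspace,\boldsymbol{\Bradius}+\boldsymbol{\safemargin}}$ (\refprop{prop.CollisionPowerDiagram}), the centroid $\ctrd_{\Bcell_i}$ is continuously differentiable and the boundary $\partial\Bcell_i$ piecewise continuously differentiable in $\vectbf{\stateP}$, so composing with the piecewise-$C^1$ metric projection onto $\ol{\Fcell}_i$ keeps each local controller piecewise continuously differentiable, hence Lipschitz on its compact domain; a nonempty-interior sensory cell still cannot appear or vanish instantaneously under a continuous motion, so the selected policy persists for a positive time and the global flow is obtained by concatenating the local trajectories.

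For the Lyapunov step I would again use $\flocopt_{\Spart}\prl{\cdot,\boldsymbol{\Sradius}}$. Differentiating along the flow and invoking \refeqn{eq.HeterogeneousGradient}: a robot $i$ with $\mathring{\Scell}_i=\emptyset$ has $\mass_{\Scell_i}=0$ and so contributes nothing to $\dot{\flocopt}_{\Spart}$ regardless of its velocity, while each robot with $\mathring{\Scell}_i\neq\emptyset$ contributes precisely the term already bounded in \refthm{thm.CollisionQualitative}, giving $\dot{\flocopt}_{\Spart}\prl{\vectbf{\stateP},\boldsymbol{\Sradius}} \le -\ctrlgain\sum_{i:\,\mathring{\Scell}_i\neq\emptyset}\mass_{\Scell_i}\norm{\vect{\stateP}_i-\ol{\ctrd}_{\Scell_i}}^2 \le 0$ (using $\vect{\stateP}_i\in\Fcell_i$ and $\norm{\vect{\stateP}_i-\ctrd_{\Scell_i}}^2\ge\norm{\ol{\ctrd}_{\Scell_i}-\ctrd_{\Scell_i}}^2$); this already yields the ``without increasing $\flocopt_{\Spart}$'' assertion. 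LaSalle's principle \cite{khalil_NonlinearSystems_2001} on the compact positively invariant set $\confspace\prl{\workspace,\boldsymbol{\Bradius}+\boldsymbol{\safemargin}}$ then drives every trajectory into the largest invariant set $M\subseteq\crl{\dot{\flocopt}_{\Spart}=0}$, on which $\vect{\stateP}_i=\ol{\ctrd}_{\Scell_i}$ (hence $\dot{\vect{\stateP}}_i=0$) for every robot with a nonempty-interior sensory cell. To conclude that on $M$ the remaining robots also settle at $\ol{\ctrd}_{\Bcell_i}$, I would run a second LaSalle argument on $M$ with the body-diagram cost $\flocopt_{\Bpart}\prl{\cdot,\boldsymbol{\Bradius}+\boldsymbol{\safemargin}}$: the frozen robots contribute nothing, and each moving (unassigned) robot contributes $-2\ctrlgain\mass_{\Bcell_i}\vectprod{\prl{\vect{\stateP}_i-\ctrd_{\Bcell_i}}}{\prl{\vect{\stateP}_i-\ol{\ctrd}_{\Bcell_i}}}\le -\ctrlgain\mass_{\Bcell_i}\norm{\vect{\stateP}_i-\ol{\ctrd}_{\Bcell_i}}^2\le 0$ by the same projection inequality with $\ctrd_{\Bcell_i}$, $\ol{\ctrd}_{\Bcell_i}=\Pi_{\ol{\Fcell}_i}\prl{\ctrd_{\Bcell_i}}$ in place of $\ctrd_{\Scell_i}$, $\ol{\ctrd}_{\Scell_i}$; invariance then forces $\vect{\stateP}_i=\ol{\ctrd}_{\Bcell_i}$ for all of them, i.e. $\vectbf{\stateP}\in\goal_{\Apart}\prl{\workspace,\boldsymbol{\Bradius}+\boldsymbol{\safemargin},\boldsymbol{\Sradius}}$.

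I expect the main obstacle to be exactly this nested argument on $M$. Because moving an unassigned robot can in principle make its sensory cell acquire nonempty interior --- reassigning it and perturbing neighboring cells --- one must argue that along $\omega$-limit trajectories the assignment pattern $I$, and hence the effective ``arena'' $\Bcell_i$ available to each unassigned robot, eventually stabilizes, so that $\flocopt_{\Bpart}$ is genuinely nonincreasing on $M$ with the claimed critical set. I anticipate that this bookkeeping --- most naturally handled by working on the $\omega$-limit set of a trajectory already contained in $M$, where $\flocopt_{\Spart}$ is constant and the assigned robots are motionless --- rather than any new inequality, is where the real work lies.
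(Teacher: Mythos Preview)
Your proposal is correct and follows essentially the same approach as the paper: positive invariance and well-posedness are inherited from \refthm{thm.CollisionQualitative}, the sensory cost $\flocopt_{\Spart}$ is used as the primary Lyapunov function (with unassigned robots contributing zero since $\mass_{\Scell_i}=0$), and a secondary descent argument using the body-diagram cost $\flocopt_{\Bpart}\prl{\cdot,\boldsymbol{\Bradius}+\boldsymbol{\safemargin}}$ pins down the unassigned robots at $\overline{\ctrd}_{\Bcell_i}$. The paper carries out the nested step more tersely than you anticipate --- it simply asserts the $\dot{\flocopt}_{\Bpart}$ computation ``given that $\vect{\stateP}_i=\overline{\ctrd}_{\Scell_i}$ for all $i$ with $\mathring{\Scell}_i\neq\emptyset$'' and invokes LaSalle once more --- without explicitly addressing the assignment-stabilization bookkeeping you flag; so the obstacle you identify is real but is treated informally in the original as well.
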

\begin{proof}
The positive invariance of $\confspace\prl{\workspace, \boldsymbol{\Bradius} \sqz{+} \boldsymbol{\safemargin}}$ under the ``move-to-constrained-active-centroid'' law  and the existence, uniqueness and continuity properties of its flow follow the same pattern as established in \refthm{thm.CollisionQualitative}.

For the stability analysis, using \refeqn{eq.HeterogeneousGradient}, \refeqn{eq.move2ctrdactive} and \refeqn{eq.ConstrainedOptimizationActive}, one can show that the continuously differentiable  utility function $\flocopt_{\Spart}\prl{., \boldsymbol{\Sradius}}$ \refeqn{eq.flocopt_heterogeneous} is nonincreasing along the trajectory of the ``move-to-constrained-active-centroid'' law as follows:
\begin{align}
\dot{\flocopt}_{\Spart}\prl{\vectbf{\stateP}, \boldsymbol{\Sradius}} &= - k \!\!\!\sum_{\substack{i \in \crl{1, \ldots,n}\\ \mathring{\Scell_i} \neq \emptyset}}\!\!\mass_{\Scell_i}\hspace{-9mm}\underbrace{2\vectprod{\prl{\vect{\stateP}_i \sqz{-} \ctrd_{\Scell_i}}}{\!\prl{\vect{\stateP}_i \sqz{-} \overline{\ctrd}_{\Scell_i}}}}_{\substack{\geq \norm{\vect{\stateP}_i - \overline{\ctrd}_{\Scell_i}}^2, \\ \text{ since } \vect{\stateP}_i \in \Fcell_i \text{ and } \norm{\vect{\stateP}_i - \ctrd_{\Scell_i}}^2 \geq \norm{\overline{\ctrd}_{\Scell_i} - \ctrd_{\Scell_i}}^2} }  \nonumber\\
& \hspace{0mm}-k \!\!\! \sum_{\substack{i \in \crl{1, \ldots,n}\\ \mathring{\Scell_i} = \emptyset}} \!\!\!\underbrace{~\mass_{\Scell_i}~}_{\substack{= 0 \\ \text{ since } \mathring{\Scell}_i =  \emptyset}}\! \!\!\! 2\vectprod{\prl{\vect{\stateP}_i \sqz{-} \ctrd_{\Scell_i}}}{\!\prl{\vect{\stateP}_i \sqz{-} \overline{\ctrd}_{\Bcell_i}}}\!, \!\! \\ 
& \leq - k \sum_{\substack{i \in \crl{1,\ldots,n} \\ \mathring{\Scell}_i \neq \emptyset}}\mass_{\Scell_i} \norm{\vect{\stateP}_i - \overline{\ctrd}_{\Scell_i}}^2 \leq 0. 
\end{align}
Hence, we have from Lasalle's Invariance Principle \cite{khalil_NonlinearSystems_2001} that,  at an equilibrium point of the ``move-to-constrained-active-centroid'' law, a robot is located at the constrained centroid, $\overline{\ctrd}_{\Scell_i}$, of its sensory cell, $\Scell_i$, if it has a nonempty interior, i.e. $\mathring{\Scell}_i \neq \emptyset$.
Given that $\vect{\stateP}_i = \overline{\ctrd}_{\Scell_i}$ for all $i \in \crl{1, \ldots,n}$ with $\mathring{\Scell}_i \neq \emptyset$, using \refeqn{eq.HeterogeneousGradient}, \refeqn{eq.move2ctrdactive} and  \refeqn{eq.ConstrainedOptimizationActive}, one can further show that 
\begin{align}
\dot{\flocopt}_{\Bpart}\prl{\vectbf{\stateP}, \boldsymbol{\Bradius} \sqz{+} \boldsymbol{\safemargin}} &= - k \!\!\!\sum_{\substack{i \in \crl{1, \ldots,n}\\ \mathring{\Scell_i} \neq \emptyset}}\!\!\mass_{\Bcell_i}\hspace{0mm}\underbrace{2\vectprod{\prl{\vect{\stateP}_i \sqz{-} \ctrd_{\Bcell_i}}}{\!\prl{\vect{\stateP}_i \sqz{-} \overline{\ctrd}_{\Scell_i}}}}_{\substack{= 0, \\ \text{ since } \vect{\stateP}_i = \overline{\ctrd}_{\Scell_i}  }}  \nonumber\\
& \hspace{0mm}-k \!\!\! \sum_{\substack{i \in \crl{1, \ldots,n}\\ \mathring{\Scell_i} = \emptyset}} \!\!\!\!\!\mass_{\Bcell_i} \hspace{-9mm} \underbrace{2\vectprod{\prl{\vect{\stateP}_i \sqz{-} \ctrd_{\Bcell_i}}}{\!\prl{\vect{\stateP}_i \sqz{-} \overline{\ctrd}_{\Bcell_i}}}}_{\substack{\geq \norm{\vect{\stateP}_i - \overline{\ctrd}_{\Bcell_i}}^2, \\ \text{ since } \vect{\stateP}_i \in \Fcell_i \text{ and } \norm{\vect{\stateP}_i - \ctrd_{\Bcell_i}}^2 \geq \norm{\overline{\ctrd}_{\Bcell_i} - \ctrd_{\Bcell_i}}^2}} \hspace{-9mm}, \!\! \\ 
& \leq - k \sum_{\substack{i \in \crl{1,\ldots,n} \\ \mathring{\Scell}_i = \emptyset}}\mass_{\Bcell_i} \norm{\vect{\stateP}_i - \overline{\ctrd}_{\Bcell_i}}^2 \leq 0. 
\end{align}
Therefore, at a stationary point of \refeqn{eq.move2ctrdactive} $i$th robot is located at the constrained centroid, $\overline{\ctrd}_{\Bcell_i}$, of its body cell $\Bcell_i$ if  $\mathring{\Scell}_i = \emptyset$.
Overall, by Lasalle's Invariance Principle, we have that any multirobot configuration starting in $\confspace\prl{\workspace, \boldsymbol{\Bradius} \sqz{+}\boldsymbol{\safemargin}}$ asymptotically converges to a locally optimal sensing configuration in $\goal_{\Apart}\prl{\workspace, \boldsymbol{\Bradius} \sqz{+} \boldsymbol{\safemargin}, \boldsymbol{\Sradius}}$, which completes the proof.  
\end{proof}

%%%%%%%%%%%%%%%%%%%%%%%%%%%%%%%%%%%%%%%%%%%%
%%%%%%%%%%%%%%%%%%%%%%%%%%%%%%%%%%%%%%%%%%%%
\subsection{Coverage Control of Differential Drive Robots}
\label{sec.DifferentialDriveRobots}
%%%%%%%%%%%%%%%%%%%%%%%%%%%%%%%%%%%%%%%%%%%%
%%%%%%%%%%%%%%%%%%%%%%%%%%%%%%%%%%%%%%%%%%%%

%We present another extension of our ``move-to-constrained-centroid'' law of \refsec{sec.move-to-constrained-centroid} for  the widely used kinematic differential drive model.

Consider a noncolliding placement of a heterogeneous group of disk-shaped differential drive robots  $\prl{\vectbf{\stateP}, \boldsymbol{\theta}} \in \confspace\prl{\workspace, \boldsymbol{\Bradius} \sqz{+} \boldsymbol{\safemargin}} \times (-\pi, \pi]^n$  in a convex planar environment $\workspace \subset \R^2$  with associated vectors of body radii $\boldsymbol{\Bradius} \in \prl{\R_{\geq0}}^n$, safety margins $\boldsymbol{\safemargin} \in \prl{\R_{>0}}^n$ and sensory footprint radii $\boldsymbol{\Sradius} \in \prl{\R_{\geq0}}^n$, where $\boldsymbol{\theta} = \prl{\theta_1, \theta_2, \ldots, \theta_n}$ is the vector of robot orientations.

The kinematic equations describing the motion of each differential drive robot are
\begin{equation}
\begin{split}
\dot{\vect{\stateP}}_i &= \ctrlvel_i \twovec{\cos \theta_i}{\sin \theta_i},\\
\dot{\theta}_i &= \ctrlang_i,
\end{split}
\end{equation}   
where $\ctrlvel_i \in \R$ and $\ctrlang_i \in \R$ are, respectively, the linear (tangential) and angular velocity inputs of $i$th robot.
Note that the differential drive model is underactuated due to the nonholonomic constraint $\scalebox{0.8}{$\tr{\twovec{-\sin \theta_i}{\cos \theta_i}}$}\dot{\vect{\stateP}}_i = 0$.

Let $\Spart\prl{\vectbf{\stateP}, \boldsymbol{\Sradius}} = \crl{\Scell_1, \ldots, \Scell_n}$ \refeqn{eq.Pcell} be the sensory diagram of $\workspace$ based on robot locations $\vectbf{\stateP}$ and sensory footprint radii $\boldsymbol{\Sradius}$, and $\Fpart\prl{\vectbf{\stateP}, \boldsymbol{\Bradius} \sqz{+} \boldsymbol{\safemargin}} = \crl{\Fcell_1, \ldots, \Fcell_n}$ \refeqn{eq.Fcell} be the free subdiagram associated with  configuration  $\vectbf{\stateP}$ and enlarged body radii $\boldsymbol{\Bradius} +\boldsymbol{\safemargin}$. 
For a choice of $\boldsymbol{\safermargin} \in \prl{\R_{>0}}^n$ with $\safermargin_i > \safemargin_i$ for all $i$, we further define $\Tpart\prl{\vectbf{\stateP}, \boldsymbol{\Bradius}\sqz{+}\boldsymbol{\safermargin}} = \crl{\Tcell_1, \Tcell_2, \ldots , \Tcell_n}$ to be
\begin{equation}
\Tcell_i \ldf \conv\prl{\crl{\vect{\stateP}_i} \cup \Fcell_i'}
\end{equation}
where $\Fpart\prl{\vectbf{\stateP}, \boldsymbol{\Bradius} \sqz{+} \boldsymbol{\safermargin}} = \crl{\Fcell_1', \Fcell_2', \ldots, \Fcell_n'}$ and $\conv\prl{A}$ denotes the convex hull of set $A$. 
Note that, since $\Fcell_i' \subset \Fcell_i$, $\vect{\stateP}_i \in \Fcell_i$ and $\Fcell_i$ is convex, every element of $\Tpart\prl{\vectbf{\stateP}, \boldsymbol{\Bradius}\sqz{+}\boldsymbol{\safermargin}}$ is contained in the associated element of $\Fpart\prl{\vectbf{\stateP}, \boldsymbol{\Bradius} \sqz{+} \boldsymbol{\safemargin}}$, i.e. $\Tcell_i \subseteq \Fcell_i$. 
It is useful to remark that we particularly require $\vect{\stateP}_i \in \Tcell_i$ to guarantee an optimal choice of a local target position in \refeqn{eq.ConstrainedOptimizationAngular} relative to $\vect{\stateP}_i$, and we construct  subset $\Tcell_i$ of $\Fcell_i$ to increase the convergence rate of our proposed coverage control law in \refeqn{eq.move2ctrdcollisiondiff}.

As in the case of ``move-to-constrained-centroid'' law of fully actuated robots in \refeqn{eq.move2ctrdcollision}, for optimal coverage  each differential drive robot will intent to move towards the constrained centroid, $\overline{\ctrd}_{\Scell_i}$ \refeqn{eq.ConstrainedCtrd}, of its sensory cell, $\Scell_i$, but with a slight difference due to the nonholonomic constraint. 
To determine a linear velocity input guaranteeing collision avoidance, we select a safe target location that solves the following convex programming,   
\begin{equation}\label{eq.ConstrainedOptimizationDiff}
\begin{split}
\text{minimize}  &\quad \norm{\vect{\stateQ}_i - \ctrd_{\Scell_i}}^2 \\
\text{subject to} &\quad  \vect{\stateQ}_i \in \overline{\Fcell}_i \cap H_i
\end{split}
\end{equation}
where 
\begin{equation}
H_i \ldf \crl{\vect{x} \in \workspace \Big| ~  \scalebox{0.9}{$\tr{\twovec{\cos \theta_i}{\sin \theta_i}}$}\prl{\vect{x} - \vect{\stateP}_i} = 0}
\end{equation} 
is the straight line motion range due to the nonholonomic constraint.
Note that $\overline{\Fcell}_i \cap H_i$ is a closed line segment in $\workspace$.
Hence, once again, the unique solution of  \refeqn{eq.ConstrainedOptimizationDiff} is given by
\begin{equation}\label{eq.ConstrainedCtrdDiff}
\overline{\ctrd}_{\Scell_i}^{\,\ctrlvel} \ldf \left \{ 
\begin{array}{l@{}l}
\ctrd_{\Scell_i} & \text{, if } \ctrd_{\Scell_i} \in  \overline{\Fcell}_i \cap H_i, \\
\Pi_{\overline{\Fcell}_i \cap H_i}\prl{\ctrd_{\Scell_i}} & \text{, otherwise},
\end{array}
\right.
\end{equation}
where $\Pi_{C}$ is the metric projection map onto a convex set $C$.
Similarly, to determine robot's angular motion, we select another safe target location that solves 
\begin{equation}\label{eq.ConstrainedOptimizationAngular}
\begin{split}
\text{minimize}  &\quad \norm{\vect{\stateQ}_i - \ctrd_{\Scell_i}}^2 \\
\text{subject to} &\quad  \vect{\stateQ}_i \in \overline{\Tcell}_i 
\end{split}
\end{equation} 
where $\overline{\Tcell}_i \subset \overline{\Fcell}_i$ is convex, and the unique solution of \refeqn{eq.ConstrainedOptimizationAngular} is
\begin{equation}\label{eq.ConstrainedCtrdAngular}
\overline{\ctrd}_{\Scell_i}^{\,\ctrlang} \ldf \left \{ 
\begin{array}{l@{}l}
\ctrd_{\Scell_i} & \text{, if } \ctrd_{\Scell_i} \in  \overline{\Tcell}_i, \\
\Pi_{\overline{\Tcell}_i}\prl{\ctrd_{\Scell_i}} & \text{, otherwise}.
\end{array}
\right.
\end{equation}

\begin{figure*}[t]
\centering
\begin{tabular}{@{}c@{\hspace{0.5mm}}c@{\hspace{0.5mm}}c@{\hspace{0.5mm}}c@{\hspace{0.5mm}}c@{}}
%\includegraphics[width=0.162\textwidth]{figures/feventdistv1.eps} 
%&
\includegraphics[width=0.2\textwidth]{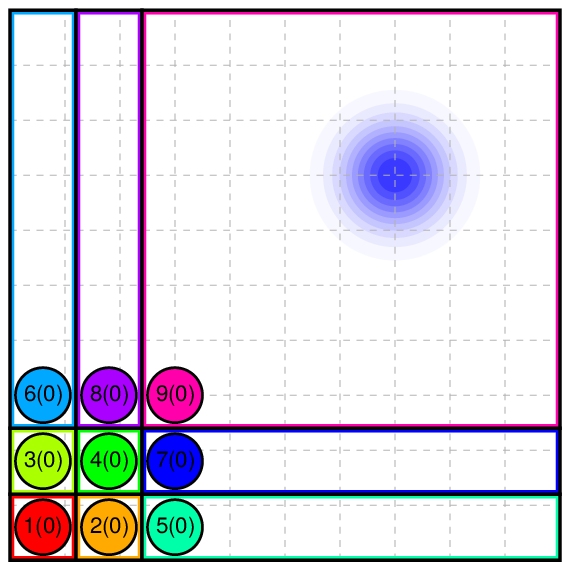} 
&
\includegraphics[width=0.2\textwidth]{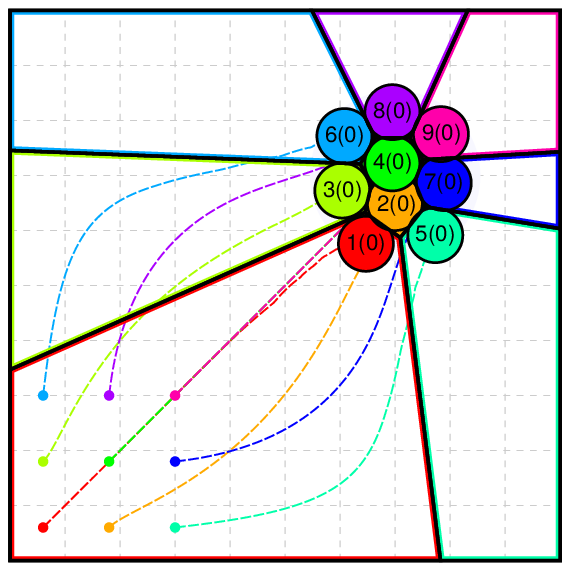} 
&
\includegraphics[width=0.2\textwidth]{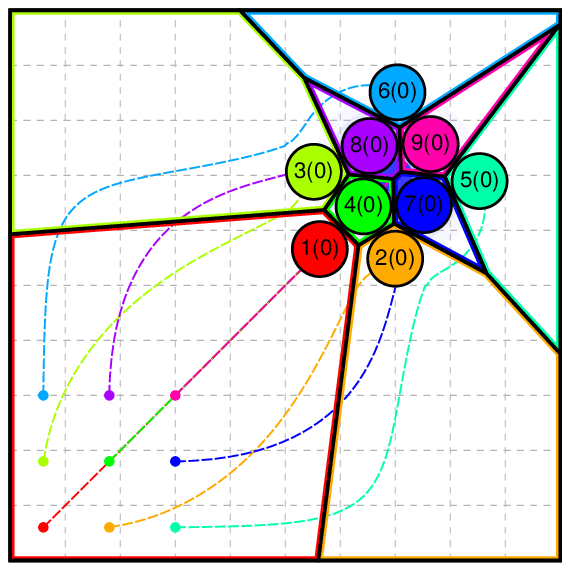} 
&
\includegraphics[width=0.2\textwidth]{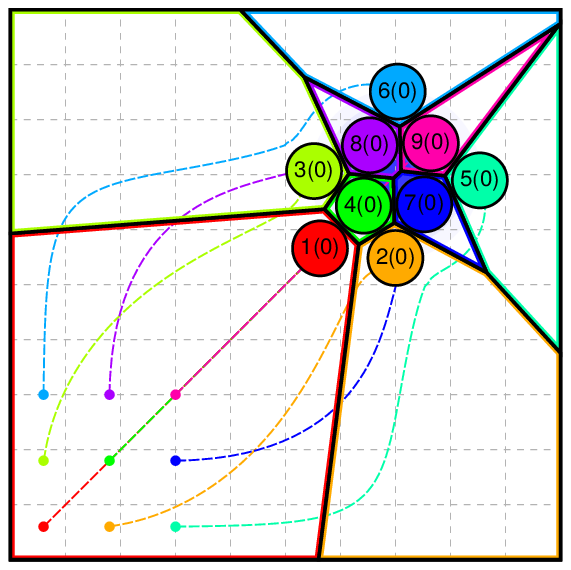} 
&
\includegraphics[width=0.2\textwidth]{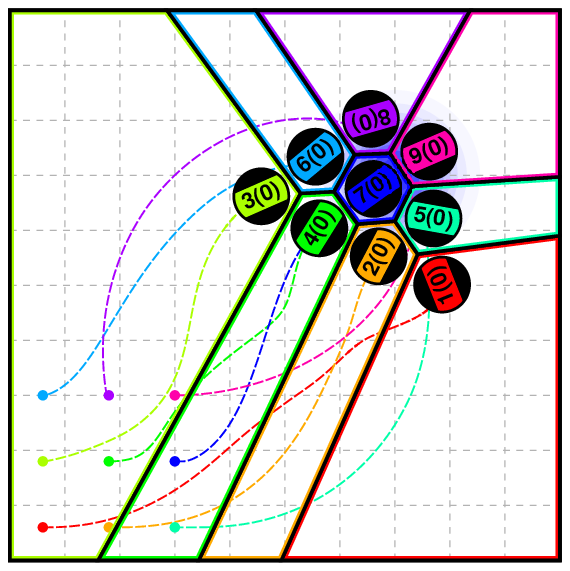} \\[-1.5mm]
\scalebox{0.8}{(a)} & \scalebox{0.8}{(b)} & \scalebox{0.8}{(c)} & \scalebox{0.8}{(d)} & \scalebox{0.8}{(e)}
\end{tabular}
\vspace{-4mm}
\caption{Avoiding collisions around a concentrated event distribution. (a) Initial configuration of a homogeneous robot network, where the weight of sensory cell are shown in the parenthesis, and the resulting  trajectories of (b) the standard ``move-centroid'' law \refeqn{eq.move2ctrdheterogeneous}, (c) the ``move-to-constrained-centroid'' law \refeqn{eq.move2ctrdcollision}, (d) the ``move-to-constrained-active-centroid'' law \refeqn{eq.move2ctrdactive}, (e) the ``move-to-constrained-centroid'' law of differential drive robots \refeqn{eq.move2ctrdcollisiondiff} which are initially aligned with the horizontal axis.}
\label{fig.ConcentratedDistribution}
\vspace{-3mm}
\end{figure*}

\begin{figure*}
\centering
\begin{tabular}{@{}c@{\hspace{0.5mm}}c@{\hspace{0.5mm}}c@{\hspace{0.5mm}}c@{\hspace{0.5mm}}c@{\hspace{0.5mm}}@{}}
%\includegraphics[width=0.2\textwidth]{figures/feventdistv2.eps} 
%&
\includegraphics[width=0.2\textwidth]{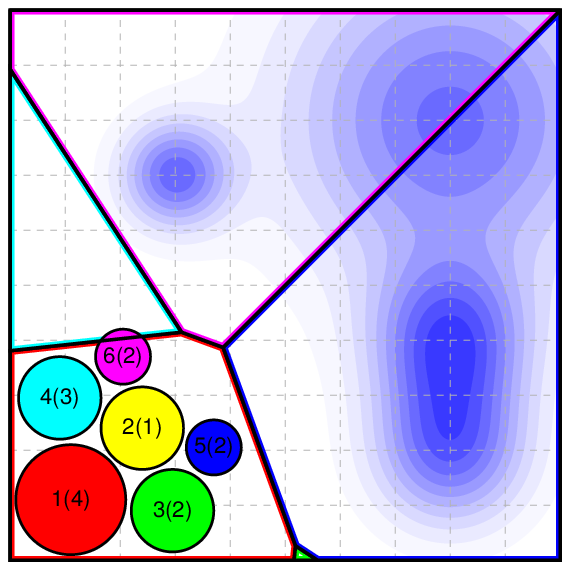} 
&
\includegraphics[width=0.2\textwidth]{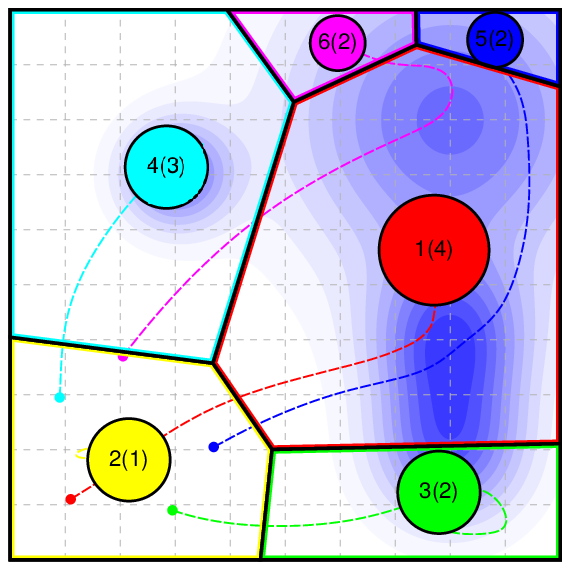} 
&
\includegraphics[width=0.2\textwidth]{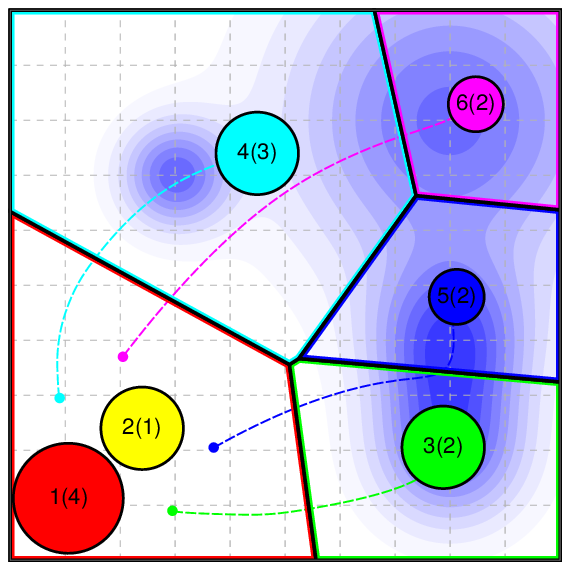} 
&
\includegraphics[width=0.2\textwidth]{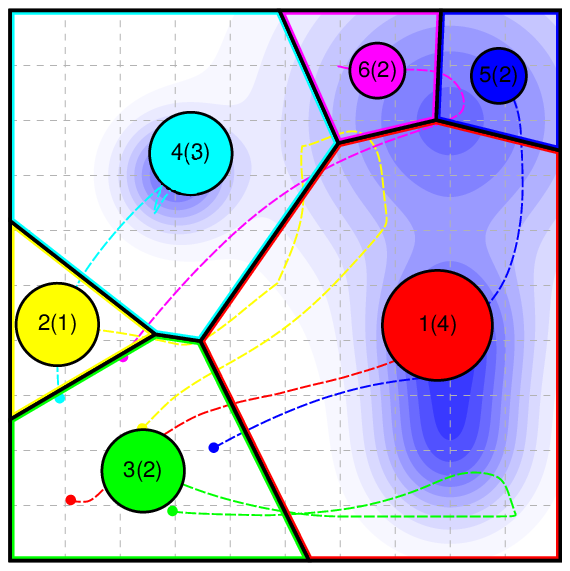} 
&
\includegraphics[width=0.2\textwidth]{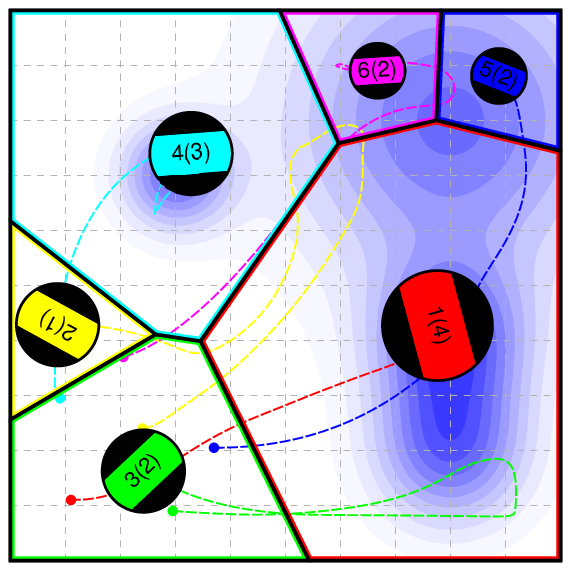} \\[-1.5mm]
\scalebox{0.8}{(a)} & \scalebox{0.8}{(b)} & \scalebox{0.8}{(c)} & \scalebox{0.8}{(d)} & \scalebox{0.8}{(e)}
\end{tabular}
\vspace{-4mm}
\caption{Safe coverage control of heterogeneous disk-shaped robots with a heuristic management of unassigned robots. (a) Initial configuration of a heterogeneous robot network, where the weight of sensory cell are shown in the parenthesis, and the resulting  trajectories of (b) the standard ``move-centroid'' law \refeqn{eq.move2ctrdheterogeneous}, (c) the ``move-to-constrained-centroid'' law \refeqn{eq.move2ctrdcollision}, (d) the ``move-to-constrained-active-centroid'' law \refeqn{eq.move2ctrdactive}, (e) the ``move-to-constrained-active-centroid'' law of differential drive robots which are initially aligned with the horizontal axis. }
\label{fig.UnassignedRobots}
\vspace{-2mm}
\end{figure*}

Accordingly, based on a standard differential drive controller \cite{astolfi_JDSMC1999}, we propose the following ``move-to-constrained-centroid'' law for differential drive robots,\footnotemark
\begin{subequations} \label{eq.move2ctrdcollisiondiff} 
\begin{align}
\ctrlvel_i &= - \ctrlgain ~\scalebox{0.8}{$\tr{\twovec{\cos \theta_i}{\sin \theta_i}}$}\prl{\vect{\stateP}_i  - \overline{\ctrd}_{\Scell_i}^{\, \ctrlvel}}, \\
\ctrlang_i &= \ctrlgain~ \atan\prl{\frac{\scalebox{0.8}{$\tr{\twovec{-\sin \theta_i}{\cos \theta_i}}$}\prl{\vect{\stateP}_i  - \overline{\ctrd}_{\Scell_i}^{\,\ctrlang}}}{\scalebox{0.8}{$\tr{\twovec{\cos \theta_i}{\sin \theta_i}}$}\prl{\vect{\stateP}_i  - \overline{\ctrd}_{\Scell_i}^{\,\ctrlang}}}}, 
\end{align} 
\end{subequations}
where $\ctrlgain > 0$ is fixed. \footnotetext{To resolve indeterminacy we set $\ctrlang_i = 0$ whenever $\vect{\stateP}_i  = \overline{\ctrd}^{\,\ctrlang}_{\Scell_i}$.}
%
%Note that the linear velocity control $\ctrlvel_i$ is a function of the constrained centroid $\overline{\ctrd}_{\Pcell_i}^{\,\ctrlvel}$  \refeqn{eq.ConstrainedCtrdDiff} of the associated power cell handling both collision avoidance and the nonholonomic constraint whereas the angular velocity control $\ctrlang_i$ is a function of the constrained centroid $\overline{\ctrd}_{\Pcell_i}^{\,\ctrlang}$ \refeqn{eq.ConstrainedCtrdAngular} of the associated power cell only handling collision avoidance.      
%
Having $\goal_{\mathcal{D}}\!\prl{\workspace, \boldsymbol{\Bradius}, \boldsymbol{\safemargin}, \boldsymbol{\safermargin}, \boldsymbol{\Sradius}}$ denote  its set of stationary points where the constrained centroids $\overline{\ctrd}_{\Scell_i}^{\,\ctrlvel}$ and $\overline{\ctrd}_{\Scell_i}^{\,\ctrlang}$ coincide and  $i$th robot is located at $\overline{\ctrd}_{\Scell_i}^{\,\ctrlvel} = \overline{\ctrd}_{\Scell_i}^{\ctrlang}$,
\begin{equation}
 \goal_{\mathcal{D}}\!\prl{\workspace, \boldsymbol{\Bradius}, \boldsymbol{\safemargin}, \boldsymbol{\safermargin}, \boldsymbol{\Sradius}} \sqz{\ldf} \crl{\!\vectbf{\stateP} \sqz{\in} \confspace\!\prl{\workspace, \boldsymbol{\Bradius} \sqz{+} \boldsymbol{\safemargin}} \! \Big|  \vect{\stateP}_i \sqz{=} \overline{\ctrd}_{\Scell_i}^{\,\ctrlvel} \sqz{=} \overline{\ctrd}_{\Scell_i}^{\,\ctrlang}~  \forall i }\!, \nonumber
\end{equation}
we summarize important qualitative properties of the ``move-to-constrained-centroid'' law of  differential drive robots as:
\begin{proposition}
For any  $\boldsymbol{\Bradius}, \boldsymbol{\Sradius} \in \prl{\R_{\geq 0}}^n$ and $\boldsymbol{\safemargin}, \boldsymbol{\safermargin} \in \prl{\R_{>0}}^n$ with $\safemargin_i < \safermargin_i$ for all $i$,
the ``move-to-constrained-centroid'' law of differential drive robots in \refeqn{eq.move2ctrdcollisiondiff} asymptotically steers all configurations in its positively invariant domain  $\confspace\prl{\workspace, \boldsymbol{\Bradius}\sqz{+}\boldsymbol{\safemargin}} \times (-\pi, \pi]^n$ towards the set of optimal sensing configurations  $\goal_{\mathcal{D}}\prl{\workspace, \boldsymbol{\Bradius}, \boldsymbol{\safemargin}, \boldsymbol{\safermargin}, \boldsymbol{\Sradius}} \times (-\pi,\pi]^n$ without increasing the utility function $\flocopt_{\Spart}\prl{\cdot, \boldsymbol{\Sradius}}$ \refeqn{eq.flocopt_heterogeneous} along the way.   
\end{proposition}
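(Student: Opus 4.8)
The plan is to follow, with one extra ingredient to handle the orientation states, the three-step template of \refthm{thm.CollisionQualitative} and \refprop{thm.ActiveQualitative}. Writing $\vect{e}_i \ldf \twovec{\cos\theta_i}{\sin\theta_i}$, the preliminary observation driving everything is that the linear-velocity target $\overline{\ctrd}_{\Scell_i}^{\,\ctrlvel}$ of \refeqn{eq.ConstrainedCtrdDiff} lies on the line $H_i$ through $\vect{\stateP}_i$ spanned by $\vect{e}_i$, so $\vect{\stateP}_i - \overline{\ctrd}_{\Scell_i}^{\,\ctrlvel}$ is parallel to $\vect{e}_i$ and hence $\dot{\vect{\stateP}}_i = \ctrlvel_i\vect{e}_i = -\ctrlgain\,\vect{e}_i\tr{\vect{e}_i}\prl{\vect{\stateP}_i - \overline{\ctrd}_{\Scell_i}^{\,\ctrlvel}} = -\ctrlgain\prl{\vect{\stateP}_i - \overline{\ctrd}_{\Scell_i}^{\,\ctrlvel}}$. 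In other words, on the position variables the differential-drive law \refeqn{eq.move2ctrdcollisiondiff} acts exactly like a ``move-to-constrained-centroid'' law whose instantaneous target $\overline{\ctrd}_{\Scell_i}^{\,\ctrlvel} \in \overline{\Fcell}_i \cap H_i \subseteq \overline{\Fcell}_i$ lies in the convex inner approximation of the free configuration neighborhood; the orientation equation then enters only through the choice of $H_i$ and through $\ctrlang_i$.

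For positive invariance and well-posedness I would argue as in \refthm{thm.CollisionQualitative}. Since $\overline{\ctrd}_{\Scell_i}^{\,\ctrlvel}\in\overline{\Fcell}_i$ for every $i$, the stacked target tuple lies in $\prod \overline{\Fpart}\prl{\vectbf{\stateP},\boldsymbol{\Bradius}\sqz{+}\boldsymbol{\safemargin}} \subseteq \confspace\prl{\workspace,\boldsymbol{\Bradius}\sqz{+}\boldsymbol{\safemargin}}$ by \reflem{lem.LocalCollisionFreeZone}, so the position component of the vector field is interior directed or, at worst, tangent to this product, leaving $\confspace\prl{\workspace,\boldsymbol{\Bradius}\sqz{+}\boldsymbol{\safemargin}}$ positively invariant (the orientation factor $(-\pi,\pi]^n$ being trivially invariant). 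Existence, uniqueness and continuity of the flow follow from the same hybrid-system decomposition used in \refthm{thm.CollisionQualitative}: $H_i$ depends smoothly on $(\vect{\stateP}_i,\theta_i)$; the boundaries and centroids of the sensory cells, of the free subcells of $\Fpart\prl{\vectbf{\stateP},\boldsymbol{\Bradius}\sqz{+}\boldsymbol{\safemargin}}$ and $\Fpart\prl{\vectbf{\stateP},\boldsymbol{\Bradius}\sqz{+}\boldsymbol{\safermargin}}$, and of the sets $\Tcell_i = \conv\prl{\crl{\vect{\stateP}_i}\cup\Fcell_i'}$ are piecewise continuously differentiable; the metric projections onto these convex sets, and $\atan$, are piecewise continuously differentiable away from the indeterminacy resolved by the stated convention; and composing piecewise-$C^1$ maps keeps each local vector field piecewise $C^1$, hence locally Lipschitz on its compact local domain.

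For the Lyapunov step I would take $V \ldf \flocopt_{\Spart}\prl{\cdot,\boldsymbol{\Sradius}}$ and combine \refeqn{eq.HeterogeneousGradient} with the reduced position dynamics above to get $\dot{V} = -2\ctrlgain\sum_{i}\mass_{\Scell_i}\,\tr{\prl{\vect{\stateP}_i-\ctrd_{\Scell_i}}}\prl{\vect{\stateP}_i-\overline{\ctrd}_{\Scell_i}^{\,\ctrlvel}}$. Because $\vect{\stateP}_i\in\overline{\Fcell}_i\cap H_i$ while $\overline{\ctrd}_{\Scell_i}^{\,\ctrlvel}$ is the metric projection of $\ctrd_{\Scell_i}$ onto the convex set $\overline{\Fcell}_i\cap H_i$, the projection variational inequality gives $\tr{\prl{\vect{\stateP}_i-\ctrd_{\Scell_i}}}\prl{\vect{\stateP}_i-\overline{\ctrd}_{\Scell_i}^{\,\ctrlvel}}\geq\norm{\vect{\stateP}_i-\overline{\ctrd}_{\Scell_i}^{\,\ctrlvel}}^2\geq 0$, the same estimate displayed in the proof of \refthm{thm.CollisionQualitative}; hence $\dot{V}\leq -2\ctrlgain\sum_{i}\mass_{\Scell_i}\norm{\vect{\stateP}_i-\overline{\ctrd}_{\Scell_i}^{\,\ctrlvel}}^2\leq 0$, with equality precisely when $\vect{\stateP}_i=\overline{\ctrd}_{\Scell_i}^{\,\ctrlvel}$ for every $i$ (a robot with empty sensory cell has $\mass_{\Scell_i}=0$ and $\ctrd_{\Scell_i}=\vect{\stateP}_i$, so this already holds for it). In particular $\flocopt_{\Spart}\prl{\cdot,\boldsymbol{\Sradius}}$ never increases along trajectories.

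The hard part will be the LaSalle step: identifying the largest invariant subset of $\crl{\dot{V} = 0}$ as $\goal_{\mathcal{D}}\prl{\workspace,\boldsymbol{\Bradius},\boldsymbol{\safemargin},\boldsymbol{\safermargin},\boldsymbol{\Sradius}}\times(-\pi,\pi]^n$. On $\crl{\dot{V}=0}$ we have $\ctrlvel_i=0$ for all $i$, so all positions, and hence all $\ctrd_{\Scell_i}$, stay frozen along the motion while the orientations may a priori still evolve. Fix $i$. If $\ctrd_{\Scell_i}=\vect{\stateP}_i$, then $\overline{\ctrd}_{\Scell_i}^{\,\ctrlvel}=\Pi_{\overline{\Fcell}_i\cap H_i}\prl{\vect{\stateP}_i}=\vect{\stateP}_i$ and $\overline{\ctrd}_{\Scell_i}^{\,\ctrlang}=\Pi_{\overline{\Tcell}_i}\prl{\vect{\stateP}_i}=\vect{\stateP}_i$ because $\vect{\stateP}_i$ belongs to both convex sets. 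Otherwise, since $\Fcell_i$ is open (by the strict inequality in \refeqn{eq.Fcell}) and contains $\vect{\stateP}_i$, the point $\vect{\stateP}_i$ lies in the relative interior of the segment $\overline{\Fcell}_i\cap H_i$, so $\vect{\stateP}_i=\overline{\ctrd}_{\Scell_i}^{\,\ctrlvel}=\Pi_{\overline{\Fcell}_i\cap H_i}\prl{\ctrd_{\Scell_i}}$ forces $\tr{\vect{e}_i}\prl{\ctrd_{\Scell_i}-\vect{\stateP}_i}=0$; with $\vect{\stateP}_i$ and $\ctrd_{\Scell_i}$ constant this orthogonality can persist along the motion only if $\vect{e}_i$, equivalently $\theta_i$, is constant, hence $\ctrlang_i=0$. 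Then \refeqn{eq.move2ctrdcollisiondiff} with $\ctrlang_i=0$ gives $\vect{\stateP}_i-\overline{\ctrd}_{\Scell_i}^{\,\ctrlang}=\lambda\vect{e}_i$ for some $\lambda$, and pairing $\vect{\stateP}_i\in\overline{\Tcell}_i$ against the projection inequality $\tr{\prl{\ctrd_{\Scell_i}-\overline{\ctrd}_{\Scell_i}^{\,\ctrlang}}}\prl{\vect{\stateP}_i-\overline{\ctrd}_{\Scell_i}^{\,\ctrlang}}\leq 0$ together with $\tr{\vect{e}_i}\prl{\ctrd_{\Scell_i}-\vect{\stateP}_i}=0$ forces $\lambda^2\leq 0$, i.e. $\vect{\stateP}_i=\overline{\ctrd}_{\Scell_i}^{\,\ctrlang}$. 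Hence the largest invariant set is contained in $\goal_{\mathcal{D}}\prl{\workspace,\boldsymbol{\Bradius},\boldsymbol{\safemargin},\boldsymbol{\safermargin},\boldsymbol{\Sradius}}\times(-\pi,\pi]^n$, and it equals this set since any configuration with $\vect{\stateP}_i=\overline{\ctrd}_{\Scell_i}^{\,\ctrlvel}=\overline{\ctrd}_{\Scell_i}^{\,\ctrlang}$ for all $i$ makes both $\ctrlvel_i$ and $\ctrlang_i$ vanish; LaSalle's Invariance Principle then yields the claimed asymptotic convergence. The points needing the most care are: the relative-interior location of $\vect{\stateP}_i$ inside the one-dimensional feasible set $\overline{\Fcell}_i\cap H_i$, which is what pins down $\theta_i$ on the invariant set; the dependence of $\overline{\ctrd}_{\Scell_i}^{\,\ctrlvel}$ on $\theta_i$ through $H_i$; and the degenerate cases $\Fcell_i'=\emptyset$ (so $\overline{\Tcell}_i=\crl{\vect{\stateP}_i}$) and a vanishing denominator in the $\atan$ of \refeqn{eq.move2ctrdcollisiondiff}, which must be checked not to break well-posedness or the invariant-set computation.
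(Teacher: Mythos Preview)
Your proposal is correct and follows the same three-part template as the paper: positive invariance via $\overline{\ctrd}_{\Scell_i}^{\,\ctrlvel}\in\overline{\Fcell}_i$, well-posedness by the hybrid decomposition of \refthm{thm.CollisionQualitative}, and the Lyapunov descent estimate $\dot{\flocopt}_{\Spart}\leq -\ctrlgain\sum_i\mass_{\Scell_i}\norm{\vect{\stateP}_i-\overline{\ctrd}_{\Scell_i}^{\,\ctrlvel}}^2$ obtained from the projection inequality on $\overline{\Fcell}_i\cap H_i$. Your reduction $\dot{\vect{\stateP}}_i=-\ctrlgain(\vect{\stateP}_i-\overline{\ctrd}_{\Scell_i}^{\,\ctrlvel})$ is exactly the observation the paper records when it writes ``where $\dot{\vect{\stateP}}_i=-\ctrlgain(\vect{\stateP}_i-\overline{\ctrd}_{\Scell_i}^{\,\ctrlvel})$.''

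The one substantive difference is how the invariant set inside $\{\dot{\flocopt}_{\Spart}=0\}$ is identified. The paper outsources the angular dynamics: once positions freeze it invokes the convergence result of \cite{astolfi_JDSMC1999} to conclude that the orientation asymptotically aligns with $\overline{\ctrd}_{\Scell_i}^{\,\ctrlang}$, and then asserts (without details) that \refeqn{eq.ConstrainedOptimizationDiff} and \refeqn{eq.ConstrainedOptimizationAngular} force $\overline{\ctrd}_{\Scell_i}^{\,\ctrlvel}=\overline{\ctrd}_{\Scell_i}^{\,\ctrlang}$. You instead give a self-contained invariance argument: openness of $\Fcell_i$ puts $\vect{\stateP}_i$ in the relative interior of the segment $\overline{\Fcell}_i\cap H_i$, so the frozen-position condition $\vect{\stateP}_i=\overline{\ctrd}_{\Scell_i}^{\,\ctrlvel}$ forces $\tr{\vect{e}_i}(\ctrd_{\Scell_i}-\vect{\stateP}_i)=0$, which pins $\theta_i$, whence $\ctrlang_i=0$, and then the variational inequality for $\Pi_{\overline{\Tcell}_i}$ together with $\vect{\stateP}_i\in\overline{\Tcell}_i$ yields $\vect{\stateP}_i=\overline{\ctrd}_{\Scell_i}^{\,\ctrlang}$. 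Your route is more explicit and avoids the (slightly delicate) appeal to \cite{astolfi_JDSMC1999} with a target $\overline{\ctrd}_{\Scell_i}^{\,\ctrlvel}$ that still depends on $\theta_i$ through $H_i$; the paper's route is terser but leans on an external result. Either way the conclusion is the same.
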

\begin{proof}
The configuration space $\confspace\prl{\workspace, \boldsymbol{\Bradius}\sqz{+}\boldsymbol{\safemargin}} \sqz{\times} (-\pi, \pi]^n$ is positively invariant under the ``move-to-constrained-centroid'' law in \refeqn{eq.move2ctrdcollisiondiff} because, by construction \refeqn{eq.ConstrainedOptimizationDiff}, each robot's motion is constrained to the associated safe partition subcell in $\workspace$.
The existence and uniqueness of its flow can be established using the pattern of the proof of \refthm{thm.CollisionQualitative} and the flow properties of the differential drive controller in \cite{astolfi_JDSMC1999}.

Now, using $\flocopt_{\Spart}\prl{\cdot, \boldsymbol{\Sradius}}$ \refeqn{eq.flocopt_heterogeneous} as a continuously differentiable Lyapunov function, we obtain the stability properties as follows: for any $\prl{\vectbf{\stateP}, \boldsymbol{\theta}} \in \confspace\prl{\workspace, \boldsymbol{\Bradius}\sqz{+} \boldsymbol{\safemargin}} \times (-\pi, \pi]^n$
\begin{align}
\dot{\flocopt}_{\Spart}\prl{\vectbf{\stateP},\boldsymbol{\Sradius}} &= -k \sum_{i =1}^{n} \mass_{\Scell_i} \hspace{-10mm}\underbrace{2\vectprod{\prl{\vect{\stateP}_i - \ctrd_{\Scell_i}}}{\prl{\vect{\stateP}_i - \overline{\ctrd}_{\Scell_i}^{\, \ctrlvel}}}}_{\substack{\geq \norm{\vect{\stateP}_i - \overline{\ctrd}_{\Scell_i}^{\,\ctrlvel}}^2, \\ \text{ since }  \vect{\stateP}_i \in \Fcell_i \cap H_i \text{ and }\norm{\vect{\stateP}_i - \ctrd_{\Scell_i}}^2 \geq \norm{\overline{\ctrd}_{\Scell_i}^{\, \ctrlvel} - \ctrd_{\Scell_i}}^2}} \hspace{-10mm}, \\
& \leq -k \sum_{i =1}^{n} \mass_{\Scell_i} \norm{\vect{\stateP}_i - \overline{\ctrd}_{\Scell_i}^{\,\ctrlvel}}^2 \leq 0,
\end{align} 
where $\dot{\vect{\stateP}}_i = -\ctrlgain\prl{\vect{\stateP}_i - \overline{\ctrd}_{\Scell_i}^{\,\ctrlvel}}$.
Hence, by LaSalle's Invariance Principle \cite{khalil_NonlinearSystems_2001}, at a stationary point of \refeqn{eq.move2ctrdcollisiondiff} $i$th robot is located at $\overline{\ctrd}_{\Scell_i}^{\,\ctrlvel}$. 
Since for fixed  $\overline{\ctrd}_{\Scell_i}^{\, \ctrlvel}$ and $\overline{\ctrd}_{\Scell_i}^{\, \ctrlang}$  the standard differential drive controller  asymptotically aligns each robot with the constrained centroid $\overline{\ctrd}_{\Scell_i}^{\, \ctrlang}$, i.e. $\scalebox{0.8}{$\tr{\twovec{-\sin \theta_i}{\cos \theta_i}}$}\!\!\prl{\vect{\stateP}_i  \sqz{-} \overline{\ctrd}_{\Scell_i}^{\, \ctrlang}} = 0$  \cite{astolfi_JDSMC1999}, it is guaranteed by \refeqn{eq.ConstrainedOptimizationDiff} and \refeqn{eq.ConstrainedOptimizationAngular} that  $\overline{\ctrd}_{\Scell_i}^{\,\ctrlvel} = \overline{\ctrd}_{\Scell_i}^{\,\ctrlang}$ whenever $\norm{\vect{\stateP}_i - \overline{\ctrd}_{\Scell_i}^{\,\ctrlvel}}= 0$ and $\scalebox{0.8}{$\tr{\twovec{-\sin \theta_i}{\cos \theta_i}}$}\!\!\prl{\vect{\stateP}_i  \sqz{-} \overline{\ctrd}_{\Scell_i}^{\,\ctrlang}} = 0$.  
Therefore, we have from LaSalle's Invariance Principle that all configurations in $\confspace\prl{\workspace, \boldsymbol{\Bradius} \sqz{+} \boldsymbol{\safemargin}} \times (-\pi,\pi]^n$ asymptotically reach $\goal_{\mathcal{D}}\prl{\workspace, \boldsymbol{\Bradius}, \boldsymbol{\safemargin}, \boldsymbol{\safermargin}, \boldsymbol{\Sradius}} \times (-\pi,\pi]^n$. 
\end{proof}

Finally, note that the ``move-to-constrained-active-centroid'' law of  \refsec{sec.UnassignedRobots} can be utilized for congestion control of differential drive robots by using active domains in \refeqn{eq.Acell} instead of the sensory diagram $\Spart\prl{\vectbf{\stateP}, \boldsymbol{\Sradius}}$, and the resulting coverage law maintains qualitative properties.

%%%%%%%%%%%%%%%%%%%%%%%%%%%%%%%%%%%%%%%%%%%%%%%%%%%%%%%%%%
%%%%%%%%%%%%%%%%%%%%%%%%%%%%%%%%%%%%%%%%%%%%%%%%%%%%%%%%%%
\section{Numerical Simulations}
\label{sec.NumericalSimulation}
%%%%%%%%%%%%%%%%%%%%%%%%%%%%%%%%%%%%%%%%%%%%%%%%%%%%%%%%%%
%%%%%%%%%%%%%%%%%%%%%%%%%%%%%%%%%%%%%%%%%%%%%%%%%%%%%%%%%%

A common source of collisions between robots while performing a distributed sensing task is a concentrated event distribution which generally causes robots to move towards the same small region of the environment.\footnotemark ~  
We therefore consider the following event distribution, $\fevent:\brl{0,10}^2\rightarrow \R_{>0}$, for a homogeneous group of disk-shaped robots operating in a $10 \times 10$ square environment, 
\footnotetext{For all simulations we set $\safemargin_i = 0.05$  and $\safermargin_i = 0.1$ for all $i \in \crl{1,2, \ldots, n}$, and all simulations
are obtained through numerical integration of the associated coverage control law using the \texttt{ode45} function of MATLAB, and the computation of the centroid of a power cell in \refeqn{eq.mass_ctrd} is approximated  by discretizing the power cell by a $20\times20$ grid.}
\begin{equation}
\fevent\prl{\vect{\stateQ}} = e^{-\norm{\vect{\stateQ} - \scalebox{0.6}{$\twovec{7}{7}$}}^2} .
\end{equation} 
In \reffig{fig.ConcentratedDistribution} we present  the resulting trajectories of our proposed coverage control algorithms. % for a group of robots starting at around the bottom left corner of the environment. 
Since the event distribution is concentrated around a small region, as expected, the standard ``move-to-centroid'' law  steers robots to a centroidal Voronoi configuration where robots collide.
On the other hand, since a Voronoi partition  has no occupancy defect,  our ``move-to-constrained-centroid'' and ``move-to-constrained-active-centroid'' laws yield the same trajectory  that asymptotically converges a collision free constrained centroidal Voronoi configuration.
It is also well known that minimizing the location optimization function $\flocopt_{\Spart}$ \refeqn{eq.flocopt_heterogeneous} generally results in a locally optimal sensing configuration, and  we observe in Figures \ref{fig.ConcentratedDistribution}.(c) and \ref{fig.ConcentratedDistribution}.(e) that,  although they are initiated at the same location,  fully actuated and differential drive robots asymptotically reach different constrained centroidal Voronoi configurations.     

To demonstrate how unassigned robots may limit the mobility of others, we consider a heterogeneous group of disk-shaped robots operating in a $10 \times 10$ environment with the following event distribution function, $\fevent:\brl{0,10}^2 \rightarrow \R_{>0}$, 
\begin{equation}
\begin{split}
\fevent\prl{\vect{\stateQ}} = 1 &+ 10 e^{-\frac{1}{9}\norm{\vect{\stateQ} - \scalebox{0.6}{$\twovec{8}{8}$}}^2} + e^{-\frac{1}{2}\norm{\vect{\stateQ} - \scalebox{0.6}{$\twovec{8}{2}$}}^2} \\
& \hspace{15mm}+ e^{-\frac{1}{2}\norm{\vect{\stateQ} - \scalebox{0.6}{$\twovec{8}{4}$}}^2} + e^{-\norm{\vect{\stateQ} - \scalebox{0.6}{$\twovec{3}{7}$}}^2},
\end{split}
\end{equation}
which is also used in \cite{kwok_martinez_IJRNC2010}.
In \reffig{fig.UnassignedRobots} we illustrate  the resulting trajectories of our safe coverage control algorithms. % for a heterogeneous groups of robots started at around the left bottom corner of the environment.
As seen in \reffig{fig.UnassignedRobots}.(a), the 2nd robot is initially not assigned to any region.
It stays stationary for a certain finite time under the the standard ``move-to-centroid'' law during which the 1st robot moves through it. 
Also notice that the 3rd robot violates the workspace boundary before converging a safe location. 
In summary, the ``move-to-centroid'' law steers disk-shaped robots  to a locally optimal sensing configuration without avoiding collisions along the way.  
Our ``move-to-constrained-centroid'' law prevents any possible self-collisions and collisions with the boundary of the environment.
However, since the 2nd robot stays unassigned for all future time, the 1st robot is blocked and it can not move to a better coverage location.
Fortunately, while guaranteeing collision avoidance, our ``move-to-constrained-active-centroid'' law  steers unassigned robots to improve assigned robots' progress for both fully actuated and differential drive robots,  as illustrated in Figures \ref{fig.UnassignedRobots}.(d) and \ref{fig.UnassignedRobots}.(e), respectively.

%%%%%%%%%%%%%%%%%%%%%%%%%%%%%%%%%%%%%%%%%%%%%%%%%%%%%%%%%%
%%%%%%%%%%%%%%%%%%%%%%%%%%%%%%%%%%%%%%%%%%%%%%%%%%%%%%%%%%
\section{Conclusion}
\label{sec.Conclusion}
%%%%%%%%%%%%%%%%%%%%%%%%%%%%%%%%%%%%%%%%%%%%%%%%%%%%%%%%%%
%%%%%%%%%%%%%%%%%%%%%%%%%%%%%%%%%%%%%%%%%%%%%%%%%%%%%%%%%%

In this paper we introduce a novel use of power diagrams for identifying collision free multirobot configurations, and  propose a constrained optimization framework combining  coverage control and collision avoidance for fully actuated disk-shaped robots, comprising the main contributions of the paper.
We also present its extensions for the widely used differential drive model and for  congestion management of unassigned robots.
Numerical simulations demonstrate the effectiveness of the proposed coverage control algorithms.

Work now in progress targets another extension of Voronoi-based coverage control for hierarchical settings, based on nested partitions of convex environments \cite{arslan_guralnik_kod_WAFR2014}.%, and to a geometric framework combining coverage control and other collective behaviours for active sensing.
We also believe that encoding collision free configurations in terms of power diagrams might have a significant value for robot motion planning, and  we are currently exploring its possible usage in the design of feedback motion planners.

\begin{comment} 
A conclusion section is not required. Although a conclusion may review the main points of the paper, do not replicate the abstract as the conclusion. A conclusion might elaborate on the importance of the work or suggest applications and extensions. 
\end{comment}

%\addtolength{\textheight}{-12cm}   % This command serves to balance the column lengths
                                  % on the last page of the document manually. It shortens
                                  % the textheight of the last page by a suitable amount.
                                  % This command does not take effect until the next page
                                  % so it should come on the page before the last. Make
                                  % sure that you do not shorten the textheight too much.

%%%%%%%%%%%%%%%%%%%%%%%%%%%%%%%%%%%%%%%%%%%%%%%%%%%%%%%%%%%%%%%%%%%%%%%%%%%%%%%%

%%%%%%%%%%%%%%%%%%%%%%%%%%%%%%%%%%%%%%%%%%%%%%%%%%%%%%%%%%%%%%%%%%%%%%%%%%%%%%%%

%%%%%%%%%%%%%%%%%%%%%%%%%%%%%%%%%%%%%%%%%%%%%%%%%%%%%%%%%%%%%%%%%%%%%%%%%%%%%%%%%
%%%%%%%%%%%%%%%%%%%%%%%%%%%%%%%%%%%%%%%%%%%%%%%%%%%%%%%%%%%%%%%%%%%%%%%%%%%%%%%%%
%\section*{APPENDIX}
%%%%%%%%%%%%%%%%%%%%%%%%%%%%%%%%%%%%%%%%%%%%%%%%%%%%%%%%%%%%%%%%%%%%%%%%%%%%%%%%%
%%%%%%%%%%%%%%%%%%%%%%%%%%%%%%%%%%%%%%%%%%%%%%%%%%%%%%%%%%%%%%%%%%%%%%%%%%%%%%%%%
%
%Appendixes should appear before the acknowledgment.

%%%%%%%%%%%%%%%%%%%%%%%%%%%%%%%%%%%%%%%%%%%%%%%%%%%%%%%%%%%%%%%%%%%%%%%%%%%%%%%%%
%%%%%%%%%%%%%%%%%%%%%%%%%%%%%%%%%%%%%%%%%%%%%%%%%%%%%%%%%%%%%%%%%%%%%%%%%%%%%%%%%
\section*{ACKNOWLEDGMENT}
%%%%%%%%%%%%%%%%%%%%%%%%%%%%%%%%%%%%%%%%%%%%%%%%%%%%%%%%%%%%%%%%%%%%%%%%%%%%%%%%%
%%%%%%%%%%%%%%%%%%%%%%%%%%%%%%%%%%%%%%%%%%%%%%%%%%%%%%%%%%%%%%%%%%%%%%%%%%%%%%%%%

This work was supported by AFOSR under the CHASE MURI FA9550-10-1-0567.

%
%The preferred spelling of the word ``acknowledgment'' in America is without an ``e'' after the ``g''. Avoid the stilted expression, ``One of us (R. B. G.) thanks . . .''  Instead, try ``R. B. G. thanks''. Put sponsor acknowledgments in the unnumbered footnote on the first page.

%%%%%%%%%%%%%%%%%%%%%%%%%%%%%%%%%%%%%%%%%%%%%%%%%%%%%%%%%%%%%%%%%%%%%%%%%%%%%%%%
%%%%%%%%%%%%%%%%%%%%%%%%%%%%%%%%%%%%%%%%%%%%%%%%%%%%%%%%%%%%%%%%%%%%%%%%%%%%%%%%
%References are important to the reader; therefore, each citation must be complete and correct. If at all possible, references should be commonly available publications.

\bibliographystyle{IEEEtran}
\bibliography{IEEEabrv,coveragecontrol}

\end{document}